\documentclass{article}

 \usepackage[preprint]{neurips_2026}

\usepackage[utf8]{inputenc} 
\usepackage[T1]{fontenc}    
\usepackage{hyperref}       
\usepackage{url}            
\usepackage{booktabs}       
\usepackage{amsfonts}       
\usepackage{nicefrac}       
\usepackage{microtype}      
\usepackage{xcolor}         
\usepackage{graphicx}

\usepackage{wrapfig}

\usepackage{amsmath}
\usepackage{amssymb}

\usepackage{amsthm}
\usepackage{multirow}
\theoremstyle{plain}
\newtheorem{lemma}{Lemma}
\newtheorem{proposition}{Proposition}
\usepackage[table]{xcolor}
\theoremstyle{remark}

\title{Predictive but Not Plannable: RC-aux for Latent World Models}

\author{%
  Wenyuan Li \\
  Hokkaido University\\
  \texttt{wenyuan@lmd.ist.hokudai.ac.jp} \\
  \And
  Guang Li\thanks{Correspondence to: Guang Li <guang@lmd.ist.hokudai.ac.jp>} \\
  Hokkaido University \\
  \texttt{guang@lmd.ist.hokudai.ac.jp} \\
  \And
  Keisuke Maeda\\
  Hokkaido University \\
  \texttt{maeda@lmd.ist.hokudai.ac.jp} \\
  \And
  Takahiro Ogawa \\
  Hokkaido University \\
  \texttt{ogawa@lmd.ist.hokudai.ac.jp} \\
  \And
  Miki Haseyama \\
  Hokkaido University \\
  \texttt{mhaseyama@lmd.ist.hokudai.ac.jp} \\
}

\begin{document}

\maketitle

\begin{abstract}
A latent world model may achieve accurate short-horizon prediction while still inducing a latent space that is poorly aligned with planning. 
A key issue is spatiotemporal mismatch: these models are often trained with local predictive supervision, but deployed for long-horizon goal-directed search in latent spaces where Euclidean distance may not reflect what is reachable within a finite action budget.
We present the \textbf{\underline{R}}eachability-\textbf{\underline{C}}orrection \textbf{\underline{aux}}iliary objective (RC-aux), a lightweight correction for this mismatch in reconstruction-free latent world models. 
RC-aux keeps the world-model backbone unchanged and adds planning-aligned supervision along two axes. 
Along the \textit{time} axis, multi-horizon open-loop prediction trains the model beyond one-step consistency. 
Along the \textit{space} axis, budget-conditioned reachability supervision, together with temporal hard negatives, encourages the latent space to distinguish states that are eventually reachable from those reachable within the current planning horizon. 
At test time, the learned reachability signal can also be used by a reachability-aware planner to favor trajectories that are both goal-directed and attainable under the available budget.
We instantiate RC-aux on LeWorldModel and evaluate it under both continuation-training and matched-from-scratch settings. 
Across goal-conditioned pixel-control tasks and a LIBERO-Goal extension, RC-aux improves LeWM-style planning with modest additional cost. 
These results suggest that planning with latent world models depends not only on predictive accuracy, but also on whether the learned representation encodes the temporal and geometric structure required by downstream search. The code is available at \url{https://github.com/Guang000/RC-aux}.
\end{abstract}

\section{Introduction}

A latent world model can be predictive without being plannable.
In goal-conditioned control from pixels, planning requires more than accurate latent prediction.
The planner also relies on the latent space to rank candidate trajectories: terminal latent distance should reflect feasible progress toward the goal, and the planning horizon should reflect what can be achieved with a finite action budget.
A candidate rollout may end near the goal in Euclidean latent distance while following a shortcut that is not supported by feasible finite-horizon transitions.
When the latent geometry permits such shortcuts, a planner can optimize a plausible latent objective while selecting trajectories that are short in representation space but poor in the environment.
Figure~\ref{fig:overview} illustrates this failure mode and the correction introduced by our method.

Latent world models have become a central tool for control from pixels, enabling agents to plan or learn behavior through compact latent rollouts~\citep{hafner2019planet,hafner2023dreamerv3,hansen2024tdmpc2}.
Recent reconstruction-free and JEPA-style models make this especially attractive for offline and goal-conditioned control, since they avoid pixel-level decoding while still supporting planning in learned latent spaces~\citep{zhou2025dinowm,sobal2025pldm,maes2026leworldmodel}.
However, many such models are trained under objectives that are only indirectly aligned with their test-time use.
A model trained primarily for short-horizon latent prediction is later embedded in a long-horizon planner that optimizes terminal latent distance to a goal.
This train-test gap makes prediction accuracy alone insufficient for reliable planning.

\begin{wrapfigure}{r}{0.52\textwidth}
    \centering
    \includegraphics[width=\linewidth]{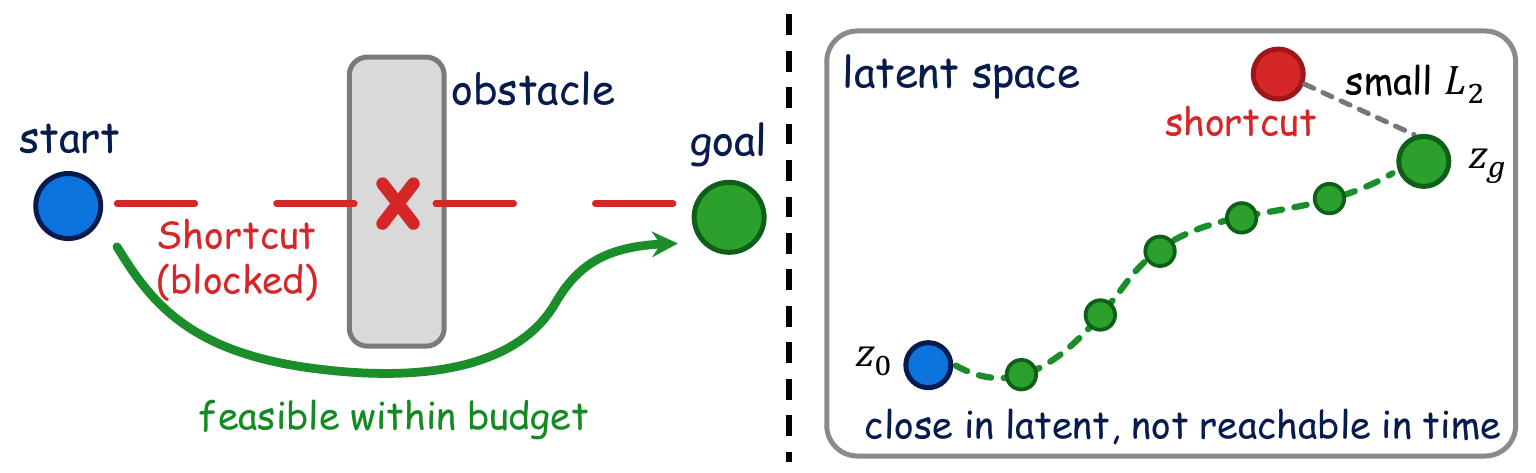}
\caption{
Conceptual illustration of the latent-shortcut failure mode.
A terminal latent-distance planner may favor a shortcut that is close in latent space but unreachable within the finite action budget.
RC-aux encourages distance to align with finite-horizon reachability, making feasible trajectories more consistent with planning.
}
\label{fig:overview}
\end{wrapfigure}

The gap creates two coupled mismatches.
The temporal mismatch arises because training supervision is concentrated on short-horizon prediction, whereas the planner must evaluate many-step open-loop rollouts at test time.
The spatial mismatch arises because Euclidean proximity in latent space does not necessarily reflect the finite-horizon reachability structure supported by the data.
As a result, planning can fail even when one-step prediction is accurate: the planner may optimize a latent objective whose geometry does not distinguish feasible progress from unsupported latent-space shortcuts.
Recent work on value-guided representations, temporal straightening, plannable continuous latents, and quasimetric reachability similarly shows that planning depends on representation geometry, not only prediction error~\citep{destrade2026valueguidedjepa,wang2026temporalstraightening,koul2024pclast,wang2023qrl,qian2023replan,bae2024tldr}.
These approaches motivate planning-aware geometry, but they do not directly align the open-loop rollout distribution, finite-horizon reachability structure, and terminal objective used by latent planners.

We propose a Reachability-Correction auxiliary objective (RC-aux) for latent world models, together with a reachability-aware planner that uses the learned signal during search.
RC-aux makes the training queries more closely match the queries made by the planner.
It corrects the temporal mismatch by training the dynamics model on multi-horizon open-loop rollouts.
It corrects the spatial mismatch by learning a budget-conditioned reachability predicate that estimates whether a target latent is attainable from a source latent within a finite horizon.
The reachability objective uses reachable positives, batch negatives, and same-trajectory \emph{temporal hard negatives}: future states that are eventually attainable but not reachable under the current budget.
These hard negatives make the budget dependence identifiable by forcing the model to distinguish ``reachable eventually'' from ``reachable within this horizon.''
We also impose the same reachability structure on predicted rollout latents, so states generated during planning inherit the finite-horizon semantics used by the planner.

At test time, our planner couples this learned reachability signal with terminal latent planning.
Standard terminal latent L2 planning is recovered as the $\lambda_{\mathrm{plan}}=0$ member of this planner family.
For $\lambda_{\mathrm{plan}}>0$, the planner favors candidate rollouts that are close to the goal under the base latent cost and contain intermediate predicted states from which the goal is estimated to be reachable within the remaining horizon.
Thus, reachability becomes an explicit search signal rather than only a training-time regularizer.

We instantiate RC-aux on LeWorldModel (LeWM), a compact reconstruction-free JEPA world model for goal-conditioned planning from pixels~\citep{maes2026leworldmodel}.
RC-aux preserves the LeWM backbone, allowing us to isolate the effect of correcting the training horizon, shaping finite-horizon reachability geometry, and using the learned reachability signal during search.
Although our experiments use LeWM, the objective itself is backbone-agnostic: it requires only latent rollouts, goal latents, and finite-horizon transition structure.

Our experiments evaluate RC-aux on the original LeWM task family and benchmark extensions under matched local protocols, including continuation controls, paired fixed-episode outcomes, ablations, and rollout visualizations.
Across these settings, RC-aux improves four out of five matched LeWM-family comparisons, with the clearest gain on Wall, where finite-budget reachability makes latent Euclidean distance a particularly weak proxy for planning progress.
These results support the central claim of this paper: strong latent planning requires not only accurate prediction, but a representation whose notions of distance and horizon are aligned with data-supported finite-horizon reachability.

Our contributions are:

\begin{itemize}

\item We introduce RC-aux, a backbone-agnostic auxiliary objective that makes latent world models more plannable by jointly correcting rollout-horizon mismatch and finite-horizon reachability geometry.

\item We introduce a reachability-aware planner family that uses the learned reachability signal during test-time search, with standard terminal latent L2 planning recovered when $\lambda_{\mathrm{plan}}=0$.

\item We provide a formal analysis showing how multi-horizon open-loop prediction, budget-conditioned reachability, and temporal hard negatives align training with finite-horizon planning queries.

\item We provide controlled empirical validation on LeWorldModel, including matched comparisons, continuation controls, ablations, rollout visualizations, and a larger LIBERO-Goal extension with OFT-style action heads.

\end{itemize}

\section{Related Work}
\label{sec:related}

RC-aux builds on latent world models for pixel-based control, including recurrent latent planners, imagined-rollout agents, and decoder-free control-centric models~\citep{hafner2019planet,hafner2023dreamerv3,hansen2024tdmpc2}. 
It is closest to reconstruction-free and JEPA-style models such as DINO-WM, PLDM, and LeWorldModel, which avoid pixel reconstruction and plan directly in learned or pretrained visual feature spaces~\citep{zhou2025dinowm,sobal2025pldm,maes2026leworldmodel}. 
Rather than introducing a new backbone, RC-aux studies how to train the latent space so that it better exposes the finite-horizon structure needed by planning.

RC-aux is also related to planning-aware representation learning, including value-guided latent distances, temporally structured representations, plannable latent spaces, reachability learning, and quasimetric objectives~\citep{destrade2026valueguidedjepa,wang2026temporalstraightening,koul2024pclast,wang2023qrl,qian2023replan,bae2024tldr,steccanella2022state}. 
These methods show that latent proximity or short-step prediction error alone may be insufficient for control. 
RC-aux targets this mismatch with budget-conditioned reachability supervision, temporal hard negatives, and multi-horizon open-loop prediction. 
Additional discussion is provided in Appendix~\ref{app:extended_related}.

\section{Method}
\label{sec:method}

We consider goal-conditioned control from pixels with a latent world model.
The offline dataset consists of trajectories
\begin{equation}
    \mathcal D
    =
    \{\tau^{(n)}\}_{n=1}^{N},
    \qquad
    \tau=(o_{1:T},a_{1:T-1}),
\end{equation}
where $o_t$ is an image observation and $a_t$ is the action between $o_t$ and $o_{t+1}$.
A latent world model consists of an encoder $e_\theta$ and an action-conditioned latent dynamics model.
The encoder maps observations to latent states,
\begin{equation}
    z_t=e_\theta(o_t),
\end{equation}
and the dynamics model predicts future latents under candidate action sequences.
Given a current observation $o_t$ and a goal observation $o_g$, a latent planner encodes
\begin{equation}
    z_t=e_\theta(o_t),
    \qquad
    z_g=e_\theta(o_g),
\end{equation}
rolls out candidate action sequences in latent space, and selects the sequence with the lowest goal-matching cost.

\subsection{Time alignment: open-loop multi-horizon prediction}
\label{sec:method_time}

A planner evaluates a world model by feeding predicted latents back into the dynamics model.
RC-aux therefore trains the model in the same open-loop regime.
For a sampled trajectory segment, let $z_{t-L+1:t}$ be the context latents and let $a_{t:t+K-1}$ be the future actions used for rollout.
Any action history used by the underlying backbone is absorbed into the context notation.
The model produces an open-loop prediction
\begin{equation}
    \hat z_{t+1:t+K}
    =
    F_\theta^{(K)}
    \left(
        z_{t-L+1:t},
        a_{t:t+K-1}
    \right),
    \label{eq:mh_rollout}
\end{equation}
where predicted latents are fed back into the dynamics model for later steps.
The encoded future latents
\begin{equation}
    z_{t+1},z_{t+2},\ldots,z_{t+K}
\end{equation}
are used as supervision targets, not as rollout inputs.
For this segment, the multi-horizon open-loop loss is
\begin{equation}
    \ell_{\mathrm{mh}}
    =
    \sum_{k=1}^{K}
        w_k
        \left\|
            \hat z_{t+k}
            -
            z_{t+k}
        \right\|_2^2,
    \label{eq:mh_loss}
\end{equation}
where $w_k$ are horizon weights.
The context latent $z_t$ is the rollout start; the first supervised prediction is $\hat z_{t+1}$, which is matched to $z_{t+1}$.
Thus, the loss directly trains the model on the predicted latents that will later be scored by the planner.

\begin{figure}[!t]    \centerline{\includegraphics[width=1\linewidth]{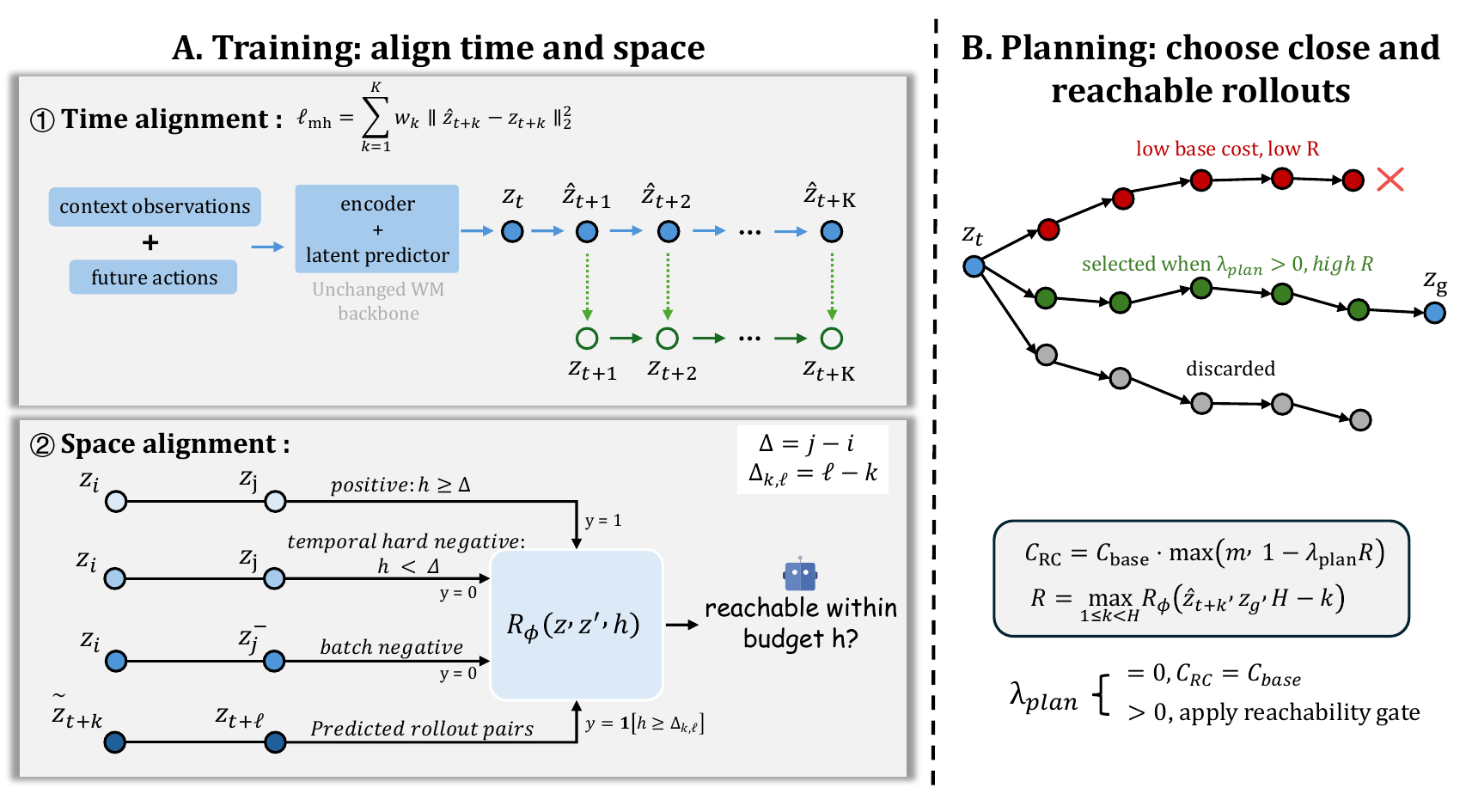}}
\caption{
The framework of RC-aux.
RC-aux keeps the latent world-model backbone unchanged and aligns it with planning through two training signals:
multi-horizon open-loop prediction for time alignment, and budget-conditioned reachability $R_\phi(z,z',h)$ for space alignment.
Reachability is trained with positives, trajectory-induced hard negatives, batch negatives, and predicted rollout pairs.
At test time, $\lambda_{\mathrm{plan}}=0$ recovers the base latent planner, while $\lambda_{\mathrm{plan}}>0$ gates the base goal cost by trajectory-level reachability to favor rollouts that are both close to the goal and empirically attainable.
}
\label{method}
\end{figure}

\subsection{Space alignment: budget-conditioned reachability}
\label{sec:method_reach}

Multi-step prediction alone does not guarantee that latent distance reflects attainability.
RC-aux therefore learns a budget-conditioned reachability predicate
\begin{equation}
    R_\phi(z,z',h)
    =
    \sigma(r_\phi(z,z',h))
    \in[0,1],
    \label{eq:reach_head}
\end{equation}
where $R_\phi(z,z',h)$ estimates whether target latent $z'$ is reachable from source latent $z$ within a finite budget $h$.
The reachability head is conditioned on an ordered source--target pair and a horizon budget.
The order matters because finite-horizon reachability is generally asymmetric.

For same-trajectory pairs $(z_i,z_j)$ with $i<j$, define the observed trajectory offset
\begin{equation}
    \Delta=j-i.
\end{equation}
RC-aux uses trajectory offsets to construct empirical finite-budget reachability labels.
These labels should be interpreted as \emph{trajectory-induced supervision} rather than ground-truth shortest-path reachability in the underlying MDP: a target that is observed after $\Delta$ steps along one trajectory may in principle be reachable by a shorter unobserved path.
Nevertheless, such offsets provide a useful proxy for the finite-horizon attainability structure that a planner needs.

For each ordered pair, budgets are sampled from
\begin{equation}
    h\in\{0,1,\ldots,H_{\max}\}.
\end{equation}
The trajectory-induced label is positive when the target is observed within the sampled budget along the trajectory, and negative otherwise:
\begin{equation}
    y_{ijh}
    =
    \mathbf 1[h\ge\Delta].
    \label{eq:reach_label}
\end{equation}

\paragraph{Reachable positives.}
If $h\ge\Delta$, then $z_j$ is observed within the available budget after $z_i$ along the sampled trajectory.
The pair $(z_i,z_j,h)$ is labeled reachable.

\paragraph{Trajectory-induced temporal hard negatives.}
If $h<\Delta$, then $z_j$ is not observed to be reached from $z_i$ within budget $h$ along the sampled trajectory.
The pair $(z_i,z_j,h)$ is therefore used as a trajectory-induced temporal hard negative.
These examples force the reachability head to distinguish eventual co-occurrence along a trajectory from empirical finite-budget attainability.

\paragraph{Batch negatives.}
Targets sampled from other trajectories in the minibatch are used as cross-trajectory negatives.
These negatives prevent the reachability head from declaring arbitrary latent pairs attainable.
For a batch negative $(z_i,z_j^{-},h)$, we assign label
\begin{equation}
    y_{ijh}^{-}=0.
\end{equation}

\paragraph{Predicted rollout pairs.}
The planner queries reachability on predicted latents, not only on encoded dataset latents.
RC-aux therefore also trains the reachability head on predicted-latent inputs produced by open-loop rollouts.
For the open-loop rollout in Eq.~\eqref{eq:mh_rollout}, predicted latents used in reachability supervision are treated as stop-gradient inputs unless otherwise stated:
\begin{equation}
    \tilde z_{t+k}
    =
    \operatorname{sg}(\hat z_{t+k}).
\end{equation}
For predicted-source pairs, RC-aux pairs a predicted intermediate latent $\tilde z_{t+k}$ with an encoded future target $z_{t+\ell}$, where $0<k<\ell\le K$.
The empirical offset from the predicted source to the encoded target is
\begin{equation}
    \Delta_{k,\ell}
    =
    \ell-k,
\end{equation}
and the corresponding finite-budget label is
\begin{equation}
    y_{k\ell h}
    =
    \mathbf 1[h\ge\Delta_{k,\ell}].
    \label{eq:pred_reach_label}
\end{equation}
This construction matches the planner-time query: from a predicted intermediate latent, the model asks whether a future target can be reached within the remaining budget.
The encoded-latent reachability terms provide representation-shaping gradients to the encoder, while the stop-gradient predicted-latent terms calibrate the reachability head on planner-induced latent distributions.

\paragraph{Avoiding budget-agnostic shortcuts.}
Trajectory-induced temporal hard negatives make the budget variable identifiable.
Without them, all same-trajectory pairs observed by the reachability loss would be positive, while batch negatives would be negative.
In that case, a classifier could fit the supervision by simply distinguishing same-trajectory pairs from cross-trajectory pairs, without using the budget $h$.
Temporal hard negatives remove this degenerate solution.
For the same ordered pair $(z_i,z_j)$ with offset $\Delta=j-i$, the label is negative when $h<\Delta$ and positive when $h\ge\Delta$.
Therefore, any classifier that fits these labels must assign different predictions to the same latent pair under different budgets.
This forces $R_\phi(z,z',h)$ to represent empirical finite-horizon attainability rather than merely visual similarity or trajectory membership.

\paragraph{Planning-alignment view.}
Ideally, a finite-horizon planner would benefit from a predicate that indicates whether a goal is reachable within the remaining budget.
Let $D^\star(s,g)$ denote the minimum number of environment steps required to reach $g$ from $s$, with $D^\star(s,g)=\infty$ if $g$ is unreachable.
The ideal finite-budget reachability predicate is
\begin{equation}
    R_h^\star(s,g)
    =
    \mathbf 1[D^\star(s,g)\le h].
\end{equation}
RC-aux does not assume access to $D^\star$.
Instead, it uses observed trajectory offsets as an empirical proxy for this horizon-indexed predicate.
This proxy is directed and budget-dependent, matching the structure of finite-horizon planning more closely than a symmetric, budget-free Euclidean latent distance. Appendix A further formalizes this view by analyzing open-loop cost distortion, budget identifiability, and the effect of reachability-aware planning as a soft feasibility gate.

Let $\mathcal B_{\mathrm{enc}}$ be the set of reachability pairs constructed from encoded latents and $\mathcal B_{\mathrm{pred}}$ the set of analogous pairs involving stop-gradient predicted rollout latents.
For a pair $(z,z',h,y)$ with label $y\in\{0,1\}$, define
\begin{equation}
    \ell_{\mathrm{bce}}(z,z',h,y)
    =
    \mathrm{BCE}
    \left(
        R_\phi(z,z',h),
        y
    \right).
\end{equation}
The reachability loss for a minibatch is
\begin{equation}
\begin{aligned}
    \ell_{\mathrm{reach}}
    &=
    \frac{1}{|\mathcal B_{\mathrm{enc}}|}
    \sum_{(z,z',h,y)\in\mathcal B_{\mathrm{enc}}}
    \omega_y\,
    \ell_{\mathrm{bce}}(z,z',h,y)
    \\
    &\quad
    +
    \rho_{\mathrm{pred}}\,
    \frac{1}{|\mathcal B_{\mathrm{pred}}|}
    \sum_{(z,z',h,y)\in\mathcal B_{\mathrm{pred}}}
    \omega_y\,
    \ell_{\mathrm{bce}}(z,z',h,y),
\end{aligned}
\label{eq:reach_loss}
\end{equation}
where $\omega_y$ are optional class-balancing weights and $\rho_{\mathrm{pred}}$ controls the weight on predicted-latent reachability supervision.

\subsection{RC-aux training objective}
\label{sec:method_objective}

RC-aux preserves the underlying world-model backbone and its latent regularization, but replaces local prediction supervision with planning-aligned open-loop supervision.
For a sampled trajectory segment, the core RC-aux objective is
\begin{equation}
    \ell_{\mathrm{RC\text{-}aux}}
    =
    \ell_{\mathrm{mh}}
    +
    \alpha \ell_{\mathrm{reg}}
    +
    \beta \ell_{\mathrm{reach}},
    \label{eq:rcaux_objective}
\end{equation}
where $\ell_{\mathrm{mh}}$ is the multi-horizon open-loop prediction loss, $\ell_{\mathrm{reach}}$ is the budget-conditioned reachability loss, and $\ell_{\mathrm{reg}}$ denotes the latent regularizer of the underlying world model.
For the LeWM instantiation, $\ell_{\mathrm{reg}}$ is the original SIGReg regularizer, while $\ell_{\mathrm{mh}}$ replaces the original one-step latent prediction loss rather than being added on top of it.
During training, Eq.~\eqref{eq:rcaux_objective} is averaged over sampled trajectory segments.
Thus, RC-aux is not a new world-model architecture and is not merely an additional test-time planner; it changes what the learned latent dynamics and latent geometry are trained to support: open-loop prediction over planning horizons and finite-budget reachability.

\subsection{Planner family}
\label{sec:method_planner}

At evaluation time, RC-aux uses the same latent rollout interface as the base planner.
For a candidate action sequence
\begin{equation}
    \tau=a_{t:t+H-1},
\end{equation}
the model predicts
\begin{equation}
    \hat z_{t+1:t+H}
    =
    F_\theta^{(H)}
    \left(
        z_{t-L+1:t},
        \tau
    \right).
\end{equation}
Let $C_{\mathrm{base}}(\tau)$ denote the base latent goal cost.
For terminal latent planning,
\begin{equation}
    C_{\mathrm{base}}(\tau)
    =
    \left\|
        \hat z_{t+H}
        -
        z_g
    \right\|_2^2.
\end{equation}
Other reductions over the predicted rollout, such as minimum, soft-minimum, or mean distance, can also be used.

RC-aux defines a planner family parameterized by a reachability coupling weight
\begin{equation}
    \lambda_{\mathrm{plan}}\in[0,1].
\end{equation}
For each predicted intermediate latent $\hat z_{t+k}$, the reachability head asks whether the goal can be reached within the remaining budget $H-k$.
We use intermediate latents with positive remaining budget:
\begin{equation}
    1\le k < H.
\end{equation}
The trajectory-level reachability score is
\begin{equation}
    R(\tau,z_g)
    =
    \max_{1\le k < H}
    R_\phi
    \left(
        \hat z_{t+k},
        z_g,
        H-k
    \right).
    \label{eq:traj_reach}
\end{equation}
The reachability-aware planning cost is
\begin{equation}
    C_{\mathrm{RC}}(\tau)
    =
    C_{\mathrm{base}}(\tau)
    \cdot
    \max
    \left(
        m,\,
        1-\lambda_{\mathrm{plan}}R(\tau,z_g)
    \right),
    \label{eq:rc_planner}
\end{equation}
where $m>0$ is a small floor for numerical stability.

When $\lambda_{\mathrm{plan}}=0$, Eq.~\eqref{eq:rc_planner} recovers the base latent planner exactly.
When $\lambda_{\mathrm{plan}}>0$, the reachability score softly discounts the base cost for rollouts whose intermediate predicted latents are estimated to make the goal reachable within the remaining budget.
Low reachability does not add a separate penalty; it simply removes this discount.
Thus, the planner still respects the base latent goal cost while favoring trajectories that are both geometrically close and empirically attainable under the learned reachability predicate.
This planner-side use is optional: $\lambda_{\mathrm{plan}}=0$ isolates the training-side effect of RC-aux, while $\lambda_{\mathrm{plan}}>0$ evaluates reachability-aware planning.

\section{Experiments}
\label{sec:experiments}

\subsection{Experimental Setup}
\label{sec:exp_setup}

We evaluate RC-aux on five pixel-based goal-conditioned control tasks: TwoRoom~\citep{two}, Reacher~\citep{reacher}, Push-T~\citep{zhou2025dinowm}, Wall~\citep{zhou2025dinowm}, and Cube~\citep{ogbench}.
Each episode provides an initial observation and a goal image, and we report the success rate as the primary metric.
We compare RC-aux with prior baselines, and LeWM-family controls under the same evaluation protocol.
For local LeWM-family runs, results are averaged over five fixed evaluation groups; when available, we include a continuation-trained LeWM control to account for additional training under the same backbone.
We additionally evaluate representation transfer on LIBERO-Goal~\citep{libero} by training an OFT-style action head on top of the learned representation.
Full evaluation details, planner settings, and training hyperparameters are provided in Appendix~\ref{app:exp_details}. All experiments were conducted on a single NVIDIA RTX A6000 Ada GPU.

\subsection{Main Results}
\label{sec:main_results}

\begin{wraptable}[18]{r}{0.58\textwidth}
\vspace{-0.8em}
\centering
\caption{
Success rates on five pixel-based goal-conditioned control tasks.
Local LeWM-family results are reported as mean$\pm$std over five fixed evaluation groups.
The matched $\Delta$ row compares RC-aux against LeWM-cont when available and against LeWM for Wall.
}
\label{tab:main-success}
\resizebox{\linewidth}{!}{
\begin{tabular}{lccccc}
\toprule
Method & TwoRoom & Reacher & Push-T & Wall & Cube \\
\midrule
DINO-WM~\citep{zhou2025dinowm} & 100.0 & 79.0 & 74.0 & 96.0 & 86.0 \\
PLDM~\citep{two} & 97.0 & 78.0 & 78.0 & -- & 65.0 \\
DINO-WM+prop~\citep{zhou2025dinowm} & -- & -- & 92.0 & -- & -- \\
GCBC~\citep{GCBC} & 100.0 & -- & 75.0 & -- & 84.0 \\
IQL~\citep{IQL} & 100.0 & -- & 20.0 & -- & 64.0 \\
IVL~\citep{ogbench} & 100.0 & -- & 33.0 & -- & 56.0 \\
\midrule
LeWM~\citep{maes2026leworldmodel} & 88.8$\pm$3.0 & 81.2$\pm$7.9 & 90.4$\pm$3.0 & 50.4$\pm$6.5 & 72.4$\pm$5.9 \\
LeWM-cont~\citep{maes2026leworldmodel} & 88.8$\pm$3.0 & 82.8$\pm$7.2 & \textbf{91.2$\pm$3.9} & -- & 72.8$\pm$5.2 \\
\rowcolor{gray!30}
\textbf{RC-aux} & \textbf{98.0$\pm$1.4} & \textbf{87.2$\pm$6.4} & 90.8$\pm$3.3 & \textbf{83.6$\pm$3.6} & \textbf{76.0$\pm$7.5} \\
\midrule
Matched $\Delta$ 
& \cellcolor{red!50} +9.2 
& \cellcolor{red!30} +4.4 
& \cellcolor{blue!5} -0.4 
& \cellcolor{red!65} +33.2 
& \cellcolor{red!15} +3.2 \\
\bottomrule
\end{tabular}
}
\vspace{-0.8em}
\end{wraptable}

Table~\ref{tab:main-success} reports success rates on the five pixel-based goal-conditioned control tasks.
We include prior baselines for benchmark context and local LeWM-family runs for controlled comparison.
RC-aux achieves strong performance across the suite, with the clearest improvement on Wall, where success increases from $50.4\%$ for LeWM to $83.6\%$.
It also improves over the continuation-trained LeWM control on TwoRoom, Reacher, and Cube, reaching $98.0\%$, $87.2\%$, and $76.0\%$ success, respectively.
On Push-T, all LeWM-family methods are already in a high-success regime, and RC-aux remains competitive with the strongest local control.

The matched $\Delta$ row of Table~\ref{tab:main-success} summarizes the within-family comparison against the strongest available LeWM-family control under the same task protocol.
For TwoRoom, Reacher, Push-T, and Cube, the delta is computed relative to LeWM-cont, since these tasks have LeWM checkpoints that can be continued under our local protocol.
For Wall, the delta is computed relative to a from-scratch LeWM run, because Wall was not included in the original LeWM evaluation and therefore has no task-specific LeWM checkpoint to continue from.
RC-aux improves four out of five matched comparisons.
The largest gain appears on Wall, consistent with the motivation of RC-aux: latent Euclidean proximity is a particularly poor proxy for finite-horizon reachability in obstacle-constrained planning.

Figure~\ref{fig:main-success} visualizes the same comparison.
The bar chart highlights both the broader benchmark context and the local LeWM-family comparison.
RC-aux is most effective on Wall and remains strong across the remaining tasks, supporting the view that reachability correction is most useful when the planner must reason about what can be reached within a finite action budget. Appendix~\ref{app:additional_results} provides paired fixed-episode outcomes and additional local success visualizations. 

\paragraph{Qualitative rollouts.}
Qualitative rollout visualizations are provided in Appendix~\ref{app:additional_qualitative}.
They show representative Cube and Wall examples in which RC-aux more often maintains intermediate progress toward the goal, consistent with the quantitative gains in Table~\ref{tab:main-success} and the planner ablation in Table~\ref{tab:planner-ablation}.

\begin{figure}[t]
    \centering
     \includegraphics[width=1\linewidth]{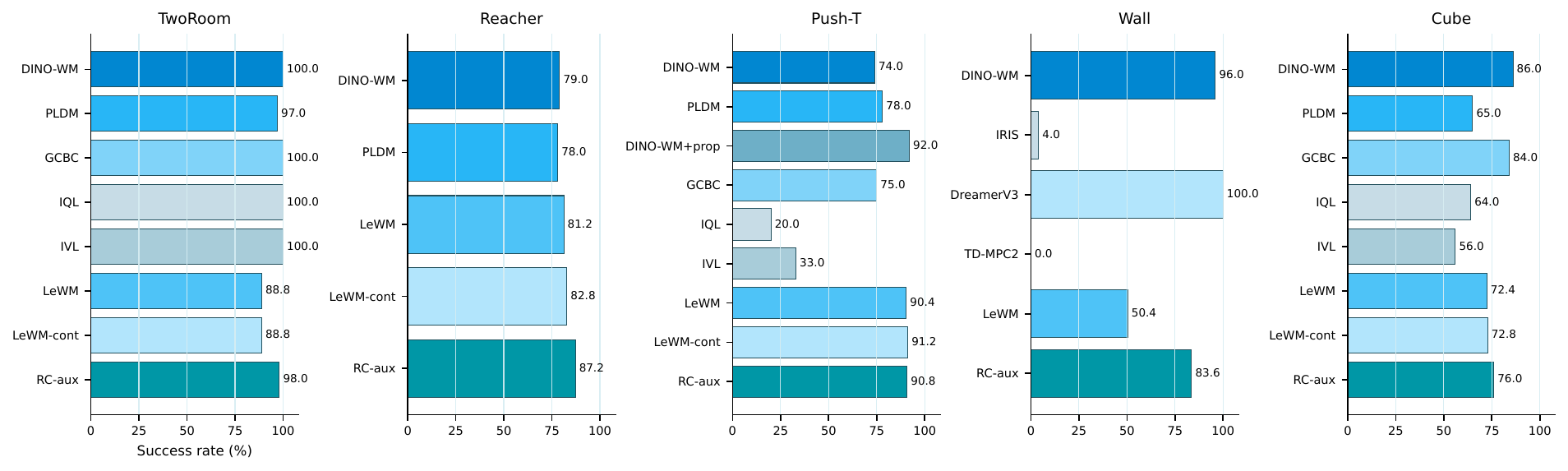}
    \caption{
    Success rates across the five pixel-based control tasks.
    RC-aux improves four of the five matched LeWM-family comparisons and gives the largest gain on Wall.
    }
    \label{fig:main-success}
\end{figure}

\begin{wraptable}{r}{0.5\textwidth}
\centering
\caption{
Ablation of RC-aux training and reachability-aware planning.
$\lambda_{\mathrm{plan}}=0$ evaluates the RC-aux-trained model with standard terminal latent-distance planning.
Full RC-aux uses reachability-aware planner scoring.
}
\label{tab:planner-ablation}
\scalebox{0.6}{
\begin{tabular}{lccc}
\toprule
Task & LeWM-family control & RC-aux, $\lambda_{\mathrm{plan}}=0$ & RC-aux full \\
\midrule
TwoRoom & 88.8$\pm$3.0 & 93.2$\pm$1.1 & \textbf{98.0$\pm$1.4} \\
Reacher & 82.8$\pm$7.2 & 81.2$\pm$5.4 & \textbf{87.2$\pm$6.4} \\
Wall & 50.4$\pm$6.5 & 72.4$\pm$3.6 & \textbf{83.6$\pm$3.6} \\
\bottomrule
\end{tabular}
}
\end{wraptable}

\subsection{Ablation Study}
\label{sec:ablation}

We ablate the role of reachability-aware planning by evaluating RC-aux with $\lambda_{\mathrm{plan}}=0$.
This setting uses the RC-aux-trained model but removes the reachability term from planner scoring, reducing test-time search to standard terminal latent-distance planning.
The comparison separates the effect of the learned RC-aux representation from the additional effect of reachability-aware search.

Table~\ref{tab:planner-ablation} shows that RC-aux is not only a test-time planner modification.
On Wall, RC-aux with $\lambda_{\mathrm{plan}}=0$ already improves success from $50.4\%$ to $72.4\%$, indicating that the training objective itself produces a more useful latent model for planning.
Adding reachability-aware planning further increases success to $83.6\%$.
A similar pattern appears on TwoRoom, where the RC-aux-trained model improves over the continuation control, and the full planner gives an additional gain. Reacher benefits most from planner-side reachability. 
Additional ablation visualizations and computational overhead analyses are provided in Appendix~\ref{app:additional_results}.

\subsection{Model Size and Computational Overhead}
\label{app:efficiency}

RC-aux preserves the LeWM backbone and adds only a lightweight reachability head. Table~\ref{tab:param-counts} reports the parameter footprint of the modules used during planner scoring.  RC-aux increases the parameter count from $18.034$M to $18.710$M, corresponding to only a $3.74\%$ overhead over LeWM.

\begin{wraptable}{r}{0.4\textwidth}
\centering
\caption{
Parameter footprint of modules used during planner scoring.
For DINO-WM-S/14, the frozen DINOv2-S/14 encoder is included because it is
active at scoring time.
}
\label{tab:param-counts}
\scalebox{0.65}{
\begin{tabular}{lcc}
\toprule
Model & parameters & Relative to RC-aux \\
\midrule
DINO-WM-S/14~\citep{zhou2025dinowm} & 42.179M & $2.25\times$ \\
LeWM~\citep{maes2026leworldmodel} & 18.034M & $0.96\times$ \\
\rowcolor{gray!30} \textbf{RC-aux} & 18.710M & $1.00\times$ \\
\bottomrule
\end{tabular}
}
\end{wraptable}

For comparison with a foundation-model-based world model, we also include DINO-WM-S/14.  Unlike RC-aux, DINO-WM uses a frozen DINOv2-S/14 visual encoder pretrained on external image data.  Although this encoder is frozen, it is still used during planner scoring, so we include it in the scoring-time parameter footprint.  Under this accounting, DINO-WM-S/14 has $42.179$M active scoring-time parameters, about $2.25\times$ the size of RC-aux.

We also measure controlled planner cost-call overhead.  This microbenchmark isolates model/planner scoring from environment simulation, rendering, video generation, and logging.  All LeWM and RC-aux measurements use the horizon $H=5$, 1024 candidate action sequences, 5 warmup calls, and 20 measured calls.
As shown in Table~\ref{tab:speed-microbenchmark}, reachability-aware scoring adds less than $0.8$ ms per cost call over LeWM across all tasks.

DINO-WM-S/14 is included as a foundation-model-based baseline.  Its frozen DINOv2-S/14~\citep{oquab2023dinov2} visual encoder substantially increases scoring-time computation. On Push-T, DINO-WM-S/14 takes $4527.141$ ms per planner cost call with 512 candidate action sequences, while the 1024-candidate setting exceeds memory
under the same hardware setup.

\begin{table}[h!]
\centering

\begin{minipage}{0.5\textwidth}
\centering
\caption{
Controlled planner cost-call timing.
LeWM and RC-aux are measured with $H=5$ and 1024 candidate action
sequences. DINO-WM-S/14 is also included as a foundation-model-based baseline.
}
\label{tab:speed-microbenchmark}
\scalebox{0.65}{
\begin{tabular}{llrr}
\toprule
Task & Model & Candidates & ms / call \\
\midrule
\multirow{2}{*}{TwoRoom}
& LeWM~\citep{maes2026leworldmodel} & 1024 & 34.253 \\
& RC-aux & 1024 & 35.016 \\
\midrule
\multirow{2}{*}{Reacher}
& LeWM~\citep{maes2026leworldmodel} & 1024 & 34.893 \\
& RC-aux & 1024 & 35.361 \\
\midrule
\multirow{4}{*}{Push-T}

& DINO-WM-S/14~\citep{zhou2025dinowm} & 512 & 4527.141 \\
& DINO-WM-S/14~\citep{zhou2025dinowm} & 1024 & - \\
& LeWM~\citep{maes2026leworldmodel} & 1024 & 34.619 \\
& RC-aux & 1024 & 35.295 \\
\bottomrule
\end{tabular}
}
\end{minipage}
\hfill
\begin{minipage}{0.45\textwidth}
\centering
\caption{
LIBERO-Goal extension with an OFT-style action head.
The main matched comparison is between trainable LeWM and trainable RC-aux under the no-repeat protocol.
}
\label{tab:libero-goal}
\scalebox{0.62}{
\begin{tabular}{llc}
\toprule
Method & Setting & Mean success \\
\midrule
LeWM + OFT-style head & Trainable, no-repeat & 0.712 \\
RC-aux + OFT-style head & Trainable, no-repeat & 0.812 \\
RC-aux + OFT-style head & Repeat-tuned & 0.864 \\
\midrule
OpenVLA-OFT 7B~\citep{openvla1,openvla2} & External reference & 0.970 \\
\bottomrule
\end{tabular}
}
\end{minipage}

\end{table}

Overall, RC-aux retains the low-cost planning profile of LeWM while adding only a small reachability module.  In contrast to DINO-WM, it does not rely on an externally pretrained visual foundation-model encoder.

\subsection{LIBERO-Goal Extension}
\label{sec:libero}

We further evaluate whether the RC-aux representation transfers beyond the five latent-planning tasks.
For this experiment, we use LIBERO-Goal~\citep{libero} with an OFT-style~\citep{openvla1} action head trained on top of the learned representation.
This setting differs from the CEM-style latent planner used above: the representation is used by an action-prediction head rather than directly searched by a planner.
It therefore tests whether the reachability-corrected representation remains useful in a larger robot manipulation setting.

Table~\ref{tab:libero-goal} reports all-task mean success on LIBERO-Goal.
Under the aligned trainable no-repeat protocol, RC-aux improves over the trainable LeWM comparator from $0.712$ to $0.812$.
With repeat tuning on weaker tasks, RC-aux further reaches $0.864$ mean success.
This shows that the representation learned with RC-aux is not only useful for terminal latent planning but also provides a stronger initialization for downstream action-head training. Per-task LIBERO-Goal results are provided in Appendix~\ref{app:libero_details}.

\section{Conclusion} \label{sec:conclusion} We presented RC-aux, a lightweight auxiliary training objective for making reconstruction-free latent world models more useful for planning. Rather than modifying the LeWM backbone, RC-aux changes what the latent space is encouraged to represent: not only predictive consistency over future observations, but also whether a candidate state is attainable under a finite action budget. This distinction is crucial for goal-conditioned control, where latent proximity alone can make infeasible states appear deceptively promising to a planner. Our results suggest that a compact latent world model can become substantially more effective when its representation is trained around the operational questions asked at test time. In this sense, RC-aux is not simply an accuracy improvement, but a planning-alignment correction: it helps the model expose reachability structure that standard rollout losses tend to leave implicit. The current formulation still uses trajectory-derived reachability labels as a proxy for true environment-level attainability, and the test-time planner uses a simple reachability gate rather than a full decision-theoretic treatment of feasibility. Extending this idea to uncertainty-aware reachability, richer directed distances, and larger-scale manipulation settings is an important direction for future work.

\bibliographystyle{plain}
\bibliography{ref}


\newpage
\appendix

\section{Extended Related Work}
\label{app:extended_related}

\paragraph{Latent world models for control from pixels.}
World models learn compact predictive representations that support decision-making by imagining the consequences of actions.
This idea builds on earlier efforts to learn compact latent or predictive models for control from high-dimensional observations, including locally linear latent dynamics for raw-image control, generative world models, and action-conditioned visual foresight~\citep{watter2015embed,ha2018worldmodels,finn2017deepvisualforesight}.
Early pixel-based model-based agents such as PlaNet showed that latent dynamics can enable online planning from images~\cite{hafner2019planet}, while Dreamer-style agents learn behavior from imagined rollouts inside recurrent latent models~\citep{hafner2020dreamer,hafner2021dreamerv2,hafner2023dreamerv3}.
Control-centric approaches such as TD-MPC and TD-MPC2 further demonstrate that decoder-free latent models can be highly effective for continuous control when coupled with trajectory optimization or value learning~\cite{hansen2024tdmpc2}.
Related latent-planning agents also show that models need not reconstruct pixels to be useful for control: MuZero plans with a learned model that predicts planning-relevant quantities, while Plan2Explore uses self-supervised world models for exploration and fast downstream adaptation~\citep{schrittwieser2020muzero,sekar2020plan2explore}.
More recent reconstruction-free world models move closer to our setting.
Before DINO-WM and PLDM, several Dreamer-style variants already explored reducing or removing pixel reconstruction through temporal predictive coding, prototype-based objectives, contrastive objectives, or predictive task-relevant targets~\citep{nguyen2021temporal,deng2022dreamerpro,okada2022dreamingv2,burchi2024mudreamer}.
DINO-WM performs test-time goal planning by predicting future DINOv2 feature embeddings from offline trajectories~\citep{oquab2023dinov2,zhou2025dinowm}, while PLDM studies reward-free offline planning with latent dynamics learned through JEPA-style self-supervision~\cite{sobal2025pldm}.
LeWorldModel (LeWM) shows that an end-to-end JEPA world model can be trained stably from pixels using next-embedding prediction together with a Gaussian latent regularizer~\cite{maes2026leworldmodel}.
RC-aux builds directly on this line of reconstruction-free latent planning, but targets a different failure mode: even when a latent model is predictive, its learned geometry may not encode which goals are reachable within the planner's finite horizon.

\paragraph{Joint-embedding predictive architectures.}
Joint-Embedding Predictive Architectures (JEPAs) avoid pixel reconstruction by predicting future or masked representations in a learned latent space~\cite{lecun2022path}.
Image and video JEPAs, such as I-JEPA and V-JEPA, use masked prediction and target-encoder mechanisms to learn semantic visual representations~\cite{assran2023ijepa,bardes2023vjepa,assran2025vjepa2}.
These designs are related to a broader family of self-supervised representation-learning methods that use target networks, temporal prediction, or contrastive objectives to obtain useful visual features for downstream control~\citep{grill2020byol,schwarzer2020spr,laskin2020curl}.
For action-conditioned world modeling, the central challenge is to learn representations that do not collapse while remaining useful for control.
Some methods avoid collapse by freezing pretrained encoders~\cite{zhou2025dinowm}; others use multi-term regularizers inspired by VICReg~\cite{bardes2022vicreg,sobal2025pldm}.
LeWM instead uses SIGReg to enforce an isotropic Gaussian latent distribution and achieves stable end-to-end JEPA training from pixels~\cite{balestriero2025lejepa,maes2026leworldmodel}.
RC-aux keeps this backbone unchanged.
Rather than introducing a new anti-collapse mechanism, it adds planning-aligned supervision on top of the existing latent world model: multi-horizon open-loop prediction for temporal alignment and budget-conditioned reachability for geometric alignment.

\paragraph{Planning-aware representation geometry.}
A growing body of work shows that latent planning depends not only on prediction accuracy, but also on the geometry of the representation used by the planner.
Earlier work on plannable representations similarly learned latent spaces that support goal specification, trajectory optimization, or symbolic-style planning from high-dimensional observations~\citep{srinivas2018upn,kurutach2018causalinfogan}.
Value-guided JEPA shapes representation distances so that they approximate a goal-conditioned value function or quasi-distance~\citep{destrade2026valueguidedjepa}.
Temporal Straightening regularizes latent trajectories to reduce curvature, making Euclidean distance a better proxy for geodesic progress and improving the conditioning of latent planning~\citep{wang2026temporalstraightening}.
PcLast learns plannable continuous representations by associating reachable states in $\ell_2$ space~\citep{koul2024pclast}.
Complementary representation-learning work studies MDP-aware, bisimulation-based, or behavioral metrics that align latent distances with decision-relevant state similarity~\citep{gelada2019deepmdp,zhang2021invariant,castro2021mico}.
Related work in goal-conditioned reinforcement learning studies temporal distance, reachability, and asymmetric quasimetric structure as planning-relevant notions of state similarity~\citep{wang2023qrl,qian2023replan,bae2024tldr,steccanella2022state}.
RC-aux shares the view that planning requires a representation with operational geometry.
However, instead of replacing terminal latent planning with a learned value metric or imposing an unconditional geometric prior, RC-aux learns a \emph{budget-conditioned} reachability signal.
This distinction is important: a target may be eventually reachable but not reachable under the current planning budget, and this finite-horizon distinction is exactly what temporal hard negatives teach the model.

\paragraph{Multi-step prediction and planning mismatch.}
Multi-step prediction has long been used to reduce the mismatch between learned dynamics and planning-time rollouts, for example, through overshooting objectives in latent dynamics models~\citep{hafner2019planet} and self-supervised future-representation prediction objectives~\citep{schwarzer2020spr,nguyen2021temporal}.
Reconstruction-free variants further show that predictive objectives can improve robustness by avoiding unnecessary pixel-level detail~\citep{deng2022dreamerpro,burchi2024mudreamer}.
However, multi-step prediction alone does not determine whether the latent distance used by a goal-conditioned planner reflects finite-horizon reachability.
A model can roll out accurately and still assign low terminal distance to states separated by obstacles, irreversible dynamics, or manipulation bottlenecks.
RC-aux therefore combines multi-horizon open-loop prediction with budget-conditioned reachability.
The former aligns the learned dynamics with the open-loop rollouts used by the planner, while the latter shapes the latent space around horizon-indexed attainability.

\section{Planning-Alignment Analysis}
\label{app:planning_alignment}

This appendix formalizes the planning-alignment view of RC-aux.
The goal is not to prove a complete performance guarantee for high-dimensional visual control.
Instead, we isolate the two mismatches targeted by RC-aux:
(i) the mismatch between local prediction training and open-loop planning, and
(ii) the mismatch between Euclidean latent proximity and finite-budget reachability.
We show that multi-horizon open-loop prediction controls planning-time cost distortion, that trajectory-induced temporal hard negatives identify the budget dependence of reachability, and that the reachability gate acts as a soft feasibility bias over the base latent planning cost.

\subsection{Setup}

Consider a deterministic controlled system
\begin{equation}
    s_{t+1}=T(s_t,a_t),
    \qquad
    o_t=O(s_t),
\end{equation}
with observation map $O$.
A latent world model consists of an encoder $e_\theta$ and an action-conditioned latent predictor.
We write
\begin{equation}
    z_t=e_\theta(o_t).
\end{equation}
For an action sequence $\tau=a_{t:t+H-1}$, let
\begin{equation}
    \hat z_{t+1:t+H}
\end{equation}
denote the open-loop latent rollout predicted by the model, and let
\begin{equation}
    z_{t+1:t+H}^{\star}
\end{equation}
denote the encoded latent sequence obtained by executing the same action sequence in the true environment and encoding the resulting observations.
The base planner evaluates a latent goal cost
\begin{equation}
    C_{\mathrm{base}}(\hat z_{t+1:t+H},z_g),
\end{equation}
such as terminal latent distance, minimum distance over the rollout, soft-minimum distance, or mean distance.

\subsection{Open-loop prediction controls planning-time cost distortion}

The planner makes decisions using predicted rollouts.
Therefore, the relevant prediction error is not only the one-step error, but the error in the open-loop latent sequence on which the planner evaluates costs.

\begin{lemma}[Planning cost distortion from rollout error]
\label{lem:cost_distortion}
Assume the base planning cost $C_{\mathrm{base}}(\cdot,z_g)$ is $L_C$-Lipschitz with respect to the stacked rollout latent vector on the relevant latent region.
Then, for any candidate action sequence $\tau$,
\begin{equation}
\left|
C_{\mathrm{base}}(\hat z_{t+1:t+H},z_g)
-
C_{\mathrm{base}}(z_{t+1:t+H}^{\star},z_g)
\right|
\le
L_C
\left\|
\hat z_{t+1:t+H}
-
z_{t+1:t+H}^{\star}
\right\|_2 .
\end{equation}
\end{lemma}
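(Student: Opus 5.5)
This is a direct application of the Lipschitz hypothesis; there is essentially nothing to invoke beyond the definition. Fix the goal latent $z_g$ and a candidate action sequence $\tau$. Regard $C_{\mathrm{base}}(\cdot,z_g)$ as a function on the stacked rollout space $\mathbb{R}^{H d}$, where $d$ is the latent dimension. Identify the predicted rollout $\hat z_{t+1:t+H}$ and the encoded true rollout $z^\star_{t+1:t+H}$ with two points $u,v\in\mathbb{R}^{Hd}$ of this space, obtained by concatenating the $H$ latents. The stacked Euclidean norm satisfies
\[
\|u-v\|_2^2=\sum_{k=1}^H\|\hat z_{t+k}-z^\star_{t+k}\|_2^2 ,
\]
which is the same quantity that appears on the right-hand side of the lemma.

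The key step is to apply the assumed $L_C$-Lipschitz property. For any two points $u,v$ in the relevant latent region, it gives
\[
|C_{\mathrm{base}}(u,z_g)-C_{\mathrm{base}}(v,z_g)|\le L_C\|u-v\|_2 .
\]
Substituting $u=\hat z_{t+1:t+H}$ and $v=z^\star_{t+1:t+H}$ yields the claimed inequality verbatim. Since $\tau$ was arbitrary, the bound holds for every candidate action sequence.

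The only point requiring care, and what I would flag as the (mild) obstacle, is the domain of the Lipschitz assumption. The hypothesis is stated only "on the relevant latent region," so I would state explicitly that both $\hat z_{t+1:t+H}$ and $z^\star_{t+1:t+H}$ are assumed to lie in that region. No convexity of the region is needed, because the Lipschitz inequality is applied pointwise to the pair and not integrated along a segment.

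I would also remark that the hypothesis is satisfied by the costs listed earlier when latents are bounded. For example, the terminal squared distance $\|\hat z_{t+H}-z_g\|_2^2$ is Lipschitz with constant $2\sup\|z-z_g\|$ over a bounded region. It depends only on the last block, so its stacked-vector Lipschitz constant is no larger. Min, soft-min and mean reductions of Lipschitz per-step distances remain Lipschitz. This justifies that the lemma is non-vacuous, and it links the right-hand side to the multi-horizon loss $\ell_{\mathrm{mh}}$ in Eq.~\eqref{eq:mh_loss} when all $w_k>0$.
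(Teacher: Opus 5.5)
Your proof is correct and follows the paper's approach: the paper's proof is the single remark that the bound follows directly from Lipschitz continuity of $C_{\mathrm{base}}$ in its stacked rollout argument, and your explicit requirement that both rollouts lie in the relevant region is a sensible clarification. Your side remarks are also consistent with the paper (your constant $2\sup\|z-z_g\|_2$ for the terminal cost refines its $4B$), though linking the bound to $\ell_{\mathrm{mh}}$ additionally requires $H\le K$.
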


\begin{proof}
This follows directly from Lipschitz continuity of $C_{\mathrm{base}}$ in its rollout argument.
\end{proof}

On a sampled training segment, $z_{t+k}$ is the encoded observation obtained after executing the sampled action sequence, and therefore corresponds to $z_{t+k}^{\star}$ for that segment.
For that segment, the RC-aux multi-horizon loss is
\begin{equation}
    \ell_{\mathrm{mh}}
    =
    \sum_{k=1}^{K}
    w_k
    \left\|
        \hat z_{t+k}
        -
        z_{t+k}
    \right\|_2^2.
\end{equation}
Assume $H\le K$ and define
\begin{equation}
    w_{\min}
    =
    \min_{1\le k\le H}w_k
    >
    0.
\end{equation}
Then
\begin{equation}
    \sum_{k=1}^{H}
    \left\|
        \hat z_{t+k}
        -
        z_{t+k}^{\star}
    \right\|_2^2
    \le
    \frac{1}{w_{\min}}
    \ell_{\mathrm{mh}}.
\end{equation}
Combining this inequality with Lemma~\ref{lem:cost_distortion} yields
\begin{equation}
\left|
C_{\mathrm{base}}(\hat z_{t+1:t+H},z_g)
-
C_{\mathrm{base}}(z_{t+1:t+H}^{\star},z_g)
\right|
\le
\frac{L_C}{\sqrt{w_{\min}}}
\sqrt{\ell_{\mathrm{mh}}}.
\end{equation}
Thus, multi-horizon prediction directly controls an upper bound on the distortion of the cost that the planner actually evaluates.

\paragraph{Terminal-cost special case.}
If the planner uses the terminal squared latent distance,
\begin{equation}
    C_{\mathrm{base}}(\hat z_{t+1:t+H},z_g)
    =
    \|\hat z_{t+H}-z_g\|_2^2,
\end{equation}
and all relevant latents satisfy $\|z\|_2\le B$, then the terminal cost is $4B$-Lipschitz on this bounded region:
\begin{equation}
\begin{aligned}
\left|
\|x-z_g\|_2^2
-
\|y-z_g\|_2^2
\right|
&=
\left|
\langle x-y,x+y-2z_g\rangle
\right|
\\
&\le
\|x-y\|_2
\left(
\|x\|_2+\|y\|_2+2\|z_g\|_2
\right)
\\
&\le
4B\|x-y\|_2.
\end{aligned}
\end{equation}

\subsection{Why one-step prediction is only an indirect control signal}

A one-step prediction objective can be locally accurate while still leaving planning-time rollout errors uncontrolled.
The following standard compounding argument makes this mismatch explicit.

\begin{lemma}[Compounding of one-step error]
\label{lem:compounding}
Let $F^\star$ be the true latent dynamics and $f_\theta$ the learned one-step predictor.
Assume $f_\theta$ is $L_f$-Lipschitz in the latent argument and the one-step prediction error is uniformly bounded on the rollout region:
\begin{equation}
    \|f_\theta(z,a)-F^\star(z,a)\|_2
    \le
    \epsilon_1.
\end{equation}
Then the $K$-step open-loop error satisfies
\begin{equation}
    \|\hat z_{t+K}-z_{t+K}^{\star}\|_2
    \le
    \epsilon_1
    \sum_{i=0}^{K-1}L_f^i.
\end{equation}
\end{lemma}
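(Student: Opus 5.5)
I will prove the bound by induction on the rollout step. The first task is to fix the recursions precisely. The open-loop prediction is $\hat z_{t+k+1}=f_\theta(\hat z_{t+k},a_{t+k})$ with $\hat z_t=z_t$, since the rollout starts from the encoded context latent. The true encoded sequence satisfies $z^\star_{t+k+1}=F^\star(z^\star_{t+k},a_{t+k})$ with $z^\star_t=z_t$. Let $e_k=\|\hat z_{t+k}-z^\star_{t+k}\|_2$, so that $e_0=0$. If the backbone uses a context window of length $L$, I absorb it into the state as the paper does. I will state the argument for the Markov one-step form and note that it extends verbatim when the Lipschitz constant is taken with respect to the stacked context.

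The key step is a one-line triangle-inequality split. Insert the intermediate point $f_\theta(z^\star_{t+k},a_{t+k})$:
\begin{equation}
e_{k+1}\le \|f_\theta(\hat z_{t+k},a_{t+k})-f_\theta(z^\star_{t+k},a_{t+k})\|_2+\|f_\theta(z^\star_{t+k},a_{t+k})-F^\star(z^\star_{t+k},a_{t+k})\|_2 .
\end{equation}
The first term is at most $L_f e_k$ by the Lipschitz property of $f_\theta$. The second term is at most $\epsilon_1$ by the uniform one-step error bound, applied at the true latent $z^\star_{t+k}$. Together these give the linear recursion $e_{k+1}\le L_f e_k+\epsilon_1$.

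Unrolling from $e_0=0$ by induction gives $e_k\le \epsilon_1\sum_{i=0}^{k-1}L_f^i$. The induction step is
\begin{equation}
L_f\epsilon_1\sum_{i=0}^{k-1}L_f^i+\epsilon_1=\epsilon_1\sum_{i=0}^{k}L_f^i .
\end{equation}
Setting $k=K$ yields the claim.

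The only real obstacle is the domain hypothesis. Both points to which the Lipschitz bound is applied, the predicted $\hat z_{t+k}$ and the true $z^\star_{t+k}$, must lie in the "rollout region" on which $L_f$ and $\epsilon_1$ are assumed valid. I will read the lemma's hypothesis as asserting exactly this, with the region containing both the true and the predicted trajectories. I will state that reading explicitly rather than try to derive invariance of the region.
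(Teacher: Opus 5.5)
Your proof is correct and follows essentially the same route as the paper: the same triangle-inequality split through the intermediate point $f_\theta(z^\star_{t+k},a_{t+k})$, giving the recursion $e_{k+1}\le L_f e_k+\epsilon_1$, which is unrolled from $e_0=0$. You add two things the paper leaves implicit: the explicit induction step, and the requirement that both $\hat z_{t+k}$ and $z^\star_{t+k}$ lie in the rollout region where $L_f$ and $\epsilon_1$ hold.
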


\begin{proof}
Let
\begin{equation}
    e_k
    =
    \|\hat z_{t+k}-z_{t+k}^{\star}\|_2.
\end{equation}
Then
\begin{equation}
\begin{aligned}
e_{k+1}
&=
\left\|
f_\theta(\hat z_{t+k},a_{t+k})
-
F^\star(z_{t+k}^{\star},a_{t+k})
\right\|_2
\\
&\le
\left\|
f_\theta(\hat z_{t+k},a_{t+k})
-
f_\theta(z_{t+k}^{\star},a_{t+k})
\right\|_2
+
\left\|
f_\theta(z_{t+k}^{\star},a_{t+k})
-
F^\star(z_{t+k}^{\star},a_{t+k})
\right\|_2
\\
&\le
L_f e_k+\epsilon_1.
\end{aligned}
\end{equation}
Since $e_0=0$, unrolling the recurrence gives the result.
\end{proof}

This lemma does not imply that one-step prediction is useless.
Rather, it shows that one-step prediction controls planning only indirectly, while RC-aux trains directly on the open-loop quantities queried by the planner.

\subsection{Finite-budget reachability as directed temporal geometry}

Let
\begin{equation}
    D^\star(s,g)
    =
    \inf_{\tau}
    \{h:T^h(s,\tau)=g\}
\end{equation}
be the shortest hitting time from $s$ to $g$, with $D^\star(s,g)=\infty$ if $g$ is unreachable.
In continuous environments, exact equality can be replaced by reaching an $\epsilon$-ball around the goal.
The ideal finite-budget reachability predicate is
\begin{equation}
    R_h^\star(s,g)
    =
    \mathbf 1[D^\star(s,g)\le h].
\end{equation}
This predicate is directed and budget-dependent.
In general,
\begin{equation}
    D^\star(s,g)\neq D^\star(g,s),
\end{equation}
especially in systems with obstacles, irreversible dynamics, or contact constraints.
Whenever the quantities are finite, $D^\star$ satisfies a directed triangle inequality:
\begin{equation}
    D^\star(s,u)
    \le
    D^\star(s,g)+D^\star(g,u),
\end{equation}
because a path from $s$ to $g$ can be concatenated with a path from $g$ to $u$.

Thus, finite-horizon planning is naturally described by horizon-indexed sublevel sets of a directed temporal distance:
\begin{equation}
    \{g:D^\star(s,g)\le h\}.
\end{equation}
By contrast, Euclidean latent distance is symmetric and budget-free.
RC-aux therefore learns a budget-conditioned predicate $R_\phi(z,z',h)$ rather than relying only on $\|z-z'\|_2$.

\subsection{Trajectory-induced reachability labels}

RC-aux does not assume access to $D^\star$.
Instead, it uses observed trajectory offsets as empirical supervision.
For a same-trajectory pair $(z_i,z_j)$ with $i<j$, define the observed offset
\begin{equation}
    \Delta=j-i.
\end{equation}
The trajectory-induced finite-budget label is
\begin{equation}
    y_{ijh}
    =
    \mathbf 1[h\ge\Delta].
\end{equation}

These labels should be interpreted carefully.
A positive label corresponds to a demonstrated feasible path within budget $h$ along the sampled trajectory.
A negative label with $h<\Delta$ should be interpreted as an empirical finite-budget negative under the observed trajectory distribution, rather than a certificate that the target is globally unreachable within $h$ under the true MDP.
A shorter unobserved path may exist, so the learned predicate is best understood as dataset-induced reachability unless additional coverage assumptions hold.

Let $D_{\mathcal D}(s,g)$ denote the shortest observed offset from $s$ to $g$ in the dataset, when such an offset exists.
Then any observed segment is a feasible path, so
\begin{equation}
    D^\star(s,g)
    \le
    D_{\mathcal D}(s,g).
\end{equation}
If the behavior data is $c$-competitive in the sense that
\begin{equation}
    D_{\mathcal D}(s,g)
    \le
    c\,D^\star(s,g),
\end{equation}
then dataset reachability approximates true reachability up to a multiplicative slack.
Specifically,
\begin{equation}
    D_{\mathcal D}(s,g)\le h
    \Rightarrow
    D^\star(s,g)\le h,
\end{equation}
while
\begin{equation}
    D_{\mathcal D}(s,g)>h
    \Rightarrow
    D^\star(s,g)>h/c.
\end{equation}
This formalizes the role of data coverage: trajectory-induced labels become closer to true finite-budget reachability as the offline trajectories become more complete and less suboptimal.

\subsection{Bayes-optimal reachability under binary supervision}

For a latent pair and budget, let $Y\in\{0,1\}$ denote the trajectory-induced finite-budget label.
The reachability head is trained with binary cross-entropy:
\begin{equation}
    \mathcal L_{\mathrm{bce}}
    =
    \mathbb E
    \left[
        -Y\log R_\phi(Z,Z',h)
        -
        (1-Y)\log(1-R_\phi(Z,Z',h))
    \right].
\end{equation}

\begin{proposition}[Bayes-optimal reachability predictor]
\label{prop:bayes_reach}
For any fixed representation, the minimizer of the binary cross-entropy risk is
\begin{equation}
    R^\star(z,z',h)
    =
    \mathbb P(Y=1\mid Z=z,Z'=z',h).
\end{equation}
If the representation preserves the relevant temporal relation and the labels are deterministic, then for same-trajectory pairs with offset $\Delta=j-i$,
\begin{equation}
    R^\star(z_i,z_j,h)
    =
    \mathbf 1[h\ge\Delta].
\end{equation}
\end{proposition}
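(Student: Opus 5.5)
The plan is to prove the first claim by the usual pointwise argument for proper scoring rules, and then to read off the second claim from that formula.

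\textbf{Step 1: pointwise reduction.} Fix the representation, so that the triple $(Z,Z',h)$ is a fixed random input. By the tower property, the risk $\mathcal L_{\mathrm{bce}}$ equals the expectation over inputs of a conditional risk. Writing $p=p(z,z',h)=\mathbb P(Y=1\mid Z=z,Z'=z',h)$ and $r=R_\phi(z,z',h)\in[0,1]$, the conditional risk is
\begin{equation}
    \psi_p(r) = -p\log r-(1-p)\log(1-r).
\end{equation}
We treat the predictor class as unrestricted, i.e.\ any measurable map into $[0,1]$, which is the sense of ``minimizer'' intended here. Then the risk is minimized by minimizing $\psi_p(r)$ separately for each input, and the minimizer is unique up to null sets of the input distribution.

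\textbf{Step 2: the scalar problem.} For $p\in(0,1)$, the function $\psi_p$ is strictly convex on $(0,1)$ and tends to $+\infty$ at both endpoints. Setting the derivative to zero gives $-p/r+(1-p)/(1-r)=0$, so $r=p$. Equivalently, $\psi_p(r)-\psi_p(p)=\mathrm{KL}(\mathrm{Bern}(p)\,\|\,\mathrm{Bern}(r))\ge 0$, with equality iff $r=p$. At the boundary, with the convention $0\log 0=0$, we have $\psi_0(r)=-\log(1-r)$, minimized uniquely at $r=0$, and symmetrically $\psi_1$ is minimized uniquely at $r=1$. Hence $R^\star(z,z',h)=p(z,z',h)$, which is the first claim. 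The sigmoid parametrization only reaches the endpoints $0$ and $1$ in the limit, so for deterministic labels the minimizer is an infimum approached as $r_\phi\to\pm\infty$. I would note this caveat rather than change the statement.

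\textbf{Step 3: the deterministic case.} This is the main obstacle, because the hypothesis ``preserves the relevant temporal relation'' has to be made precise. I would formalize it as follows: the map from latent pairs back to their offsets is well defined on the support. That is, whenever $(z,z')$ arises as $(z_i,z_j)$ from a same-trajectory pair, its offset $\Delta=j-i$ is a function $\Delta(z,z')$ of the latent pair. In particular, no two pairs with different offsets collide in latent space, and no cross-trajectory pair maps to the same latents with a conflicting label.

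Under this injectivity-type condition, the label $Y=\mathbf 1[h\ge\Delta]$ is, conditionally on $(Z,Z',h)=(z_i,z_j,h)$, almost surely equal to $\mathbf 1[h\ge\Delta(z_i,z_j)]$. So the conditional probability is the indicator itself, and Step 2 gives $R^\star(z_i,z_j,h)=\mathbf 1[h\ge\Delta]$.

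I would add a remark that without this condition, $p$ becomes a mixture over colliding offsets, and $R^\star$ is then the empirical fraction of colliding pairs with $\Delta\le h$. That fraction is still nondecreasing in $h$, which connects back to the budget-identifiability discussion.
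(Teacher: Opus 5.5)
Your proposal is correct and takes the same route as the paper: the paper also minimizes the conditional BCE $-p\log q-(1-p)\log(1-q)$ pointwise to obtain $q=p$, and then reads off the deterministic-label case. Your additions make explicit what the paper's one-line proof leaves implicit: the KL identity, the boundary cases $p\in\{0,1\}$, the caveat that a sigmoid head reaches $0$ and $1$ only in the limit, and the injectivity reading of ``preserves the relevant temporal relation''.
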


\begin{proof}
For a fixed input $(z,z',h)$, let $p=\mathbb P(Y=1\mid z,z',h)$ and let $q$ be the predicted probability.
The conditional BCE risk is $-p\log q-(1-p)\log(1-q)$, whose pointwise minimizer is $q=p$.
The deterministic-label case follows immediately.
\end{proof}

This proposition clarifies what the reachability head estimates: a conditional probability of empirical finite-budget reachability under the data-generating distribution.

\subsection{Trajectory-induced temporal hard negatives identify the budget}

The most important role of trajectory-induced temporal hard negatives is to make the budget variable identifiable.

\begin{proposition}[Budget identifiability]
\label{prop:budget_identifiability}
Consider same-trajectory latent pairs $(z_i,z_j)$ with temporal offset $\Delta=j-i$.
If reachability supervision includes same-trajectory positives and batch negatives, but never includes same-trajectory pairs with insufficient budget, then there exists a classifier that fits all observed labels while ignoring the budget $h$ on same-trajectory pairs.
If trajectory-induced temporal hard negatives with $h<\Delta$ are included, then any classifier that fits both labels for the same ordered pair must depend on $h$.
\end{proposition}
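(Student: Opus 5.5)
The proof has two parts, one for each half of the statement: an explicit budget-agnostic classifier for the first, and a short contradiction argument for the second.

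\textbf{First half: construct a classifier.} Without hard negatives, the supervision consists of same-trajectory pairs $(z_i,z_j,h)$ with $h\ge\Delta$, all labeled $1$, and batch pairs $(z_i,z_j^{-},h)$, all labeled $0$. Let $\mathcal P$ be the set of ordered latent pairs appearing as same-trajectory positives, and $\mathcal N$ the set appearing as batch negatives. Define
\begin{equation}
    \bar R(z,z',h)=\mathbf 1[(z,z')\in\mathcal P],
\end{equation}
which ignores $h$. This fits every observed label, provided no ordered pair lies in both $\mathcal P$ and $\mathcal N$. That is the only real obstacle in this half.

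I would handle it by adopting the natural assumption that distinct trajectories give distinct latent pairs, i.e.\ $\mathcal P\cap\mathcal N=\emptyset$. Under this assumption the labels are a function of the pair alone. If $\mathcal P$ and $\mathcal N$ did intersect, no classifier at all could fit the labels, so the claim should be read under consistent supervision. I would state this assumption explicitly. To realize $\bar R$ as $\sigma(r_\phi)$ on a finite set, replace the indicator by a thresholded version, or read ``fit'' as correct classification at threshold $1/2$.

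\textbf{Second half: argue by contradiction.} Now include temporal hard negatives. Fix an ordered same-trajectory pair $(z_i,z_j)$ with $\Delta=j-i\ge 1$. The budget grid $\{0,\dots,H_{\max}\}$ with $H_{\max}\ge\Delta$ contains both a budget $h_1<\Delta$ and a budget $h_2\ge\Delta$; for instance $h_1=\Delta-1$ and $h_2=\Delta$. The labeling rule~\eqref{eq:reach_label} then assigns $y_{ijh_1}=0$ and $y_{ijh_2}=1$. Suppose a classifier fits both labels, meaning $R(z_i,z_j,h_1)<1/2\le R(z_i,z_j,h_2)$, or exact equality in the $\{0,1\}$ case. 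If $R$ did not depend on $h$, we would have $R(z_i,z_j,h_1)=R(z_i,z_j,h_2)$, a contradiction. Hence $h\mapsto R(z_i,z_j,h)$ is non-constant on this pair.

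Finally, I would remark that a fitting classifier must also be monotone in $h$ on this pair, with its threshold at $\Delta$. This connects the result to Proposition~\ref{prop:bayes_reach}: with deterministic labels, the Bayes predictor is $\mathbf 1[h\ge\Delta]$, which is budget-dependent.
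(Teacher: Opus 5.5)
Your proof is correct and follows the paper's own route: an $h$-independent classifier that separates same-trajectory pairs from cross-trajectory pairs handles the first half, and the second half uses the fact that a single ordered pair gets label $0$ at some $h<\Delta$ and label $1$ at some $h\ge\Delta$. Making the disjointness assumption $\mathcal P\cap\mathcal N=\emptyset$ explicit is a useful addition the paper leaves implicit, with two small corrections to your side remarks: overlap rules out only budget-agnostic fits, not all fits (an $h$-dependent classifier can still fit an overlapping pair sampled at different budgets), and your closing monotonicity claim holds only for thresholded predictions and only when every budget on the grid is actually supervised for that pair.
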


\begin{proof}
Without trajectory-induced temporal hard negatives, every observed same-trajectory pair used by the reachability loss is labeled positive, while batch negatives are labeled negative.
A classifier can therefore predict positive for same-trajectory pairs and negative for cross-trajectory pairs, independent of $h$.
This fits the labels without learning finite-budget reachability.

With trajectory-induced temporal hard negatives, the same ordered pair $(z_i,z_j)$ can be labeled negative for budgets $h<\Delta$ and positive for budgets $h\ge\Delta$.
Any classifier that fits both labels must assign different predictions to the same latent pair under different budgets.
Therefore, it cannot ignore $h$.
\end{proof}

This result explains why hard negatives are not merely additional negative samples.
They are the samples that identify the horizon dependence of the reachability predicate.

\subsection{Reachability-aware planning as a soft feasibility bias}

RC-aux optionally modulates the base latent planning cost with the trajectory-level reachability score
\begin{equation}
    R(\tau,z_g)
    =
    \max_{1\le k < H}
    R_\phi(\hat z_{t+k},z_g,H-k).
\end{equation}
The planning cost is
\begin{equation}
    C_{\mathrm{RC}}(\tau)
    =
    C_{\mathrm{base}}(\tau)
    \cdot
    \max(m,1-\lambda_{\mathrm{plan}}R(\tau,z_g)),
\end{equation}
where $m>0$ is a small floor.

Ignoring the floor $m$ for clarity, and assuming $0\le\lambda_{\mathrm{plan}}<1$, consider two trajectories $\tau_a$ and $\tau_b$ with base costs
\begin{equation}
    d_a=C_{\mathrm{base}}(\tau_a),
    \qquad
    d_b=C_{\mathrm{base}}(\tau_b),
\end{equation}
and reachability scores
\begin{equation}
    R_a=R(\tau_a,z_g),
    \qquad
    R_b=R(\tau_b,z_g).
\end{equation}
RC-aux prefers $\tau_a$ over $\tau_b$ when
\begin{equation}
    d_a(1-\lambda_{\mathrm{plan}}R_a)
    <
    d_b(1-\lambda_{\mathrm{plan}}R_b).
\end{equation}
Equivalently,
\begin{equation}
    \frac{d_a}{d_b}
    <
    \frac{
        1-\lambda_{\mathrm{plan}}R_b
    }{
        1-\lambda_{\mathrm{plan}}R_a
    }.
\end{equation}
If $\tau_a$ has high reachability $R_a\approx 1$ and $\tau_b$ has low reachability $R_b\approx 0$, then $\tau_a$ can be selected even if its base latent cost is larger, as long as
\begin{equation}
    \frac{d_a}{d_b}
    <
    \frac{1}{1-\lambda_{\mathrm{plan}}}.
\end{equation}
Thus, reachability-aware planning can prefer a slightly longer but empirically attainable trajectory over a deceptively short latent-space shortcut.
When $\lambda_{\mathrm{plan}}=0$, the condition reduces to standard base-cost planning.

\subsection{Predicted rollout pairs reduce planner-time distribution shift}

The reachability head is queried during planning on predicted latents $\hat z$, not only on encoded dataset latents $z$.
Training reachability only on encoded latents may leave the classifier poorly calibrated on the planner-induced rollout distribution.
Predicted rollout pairs address this mismatch.

Let $r_\phi(z,z',h)$ be the reachability logit, and suppose that on encoded pairs it classifies with margin $\gamma>0$:
\begin{equation}
    y\,r_\phi(z,z',h)\ge\gamma,
    \qquad
    y\in\{-1,+1\}.
\end{equation}
Assume $r_\phi$ is $L_r$-Lipschitz in its source latent.
If a predicted latent satisfies
\begin{equation}
    \|\tilde z-z\|_2\le\delta
\end{equation}
and
\begin{equation}
    L_r\delta<\gamma,
\end{equation}
then replacing $z$ by $\tilde z$ does not change the sign of the reachability prediction:
\begin{equation}
    y\,r_\phi(\tilde z,z',h)>0.
\end{equation}
Thus, reliable reachability on planner-time rollouts requires the classifier to be robust on predicted latents.
RC-aux trains this regime directly by applying reachability supervision to predicted rollout pairs.
If stop-gradient is used on predicted latents, these terms calibrate the reachability head on planner-induced latents without directly backpropagating reachability labels into the rollout model.
Without stop-gradient, they additionally propagate the reachability signal into the predicted rollout states.

\subsection{Summary}

The analysis supports the design of RC-aux in three ways.
First, multi-horizon open-loop prediction controls the distortion of the latent costs evaluated by the planner.
Second, budget-conditioned reachability learns empirical horizon-indexed reachable sets, and trajectory-induced temporal hard negatives are necessary to make the budget dependence identifiable.
Third, the optional reachability-aware planner acts as a soft feasibility bias over the base latent cost, recovering the base planner when $\lambda_{\mathrm{plan}}=0$ and favoring empirically attainable rollouts when $\lambda_{\mathrm{plan}}>0$.

\section{Experimental Details}
\label{app:exp_details}

This appendix provides additional details for the experimental protocol used in Section~\ref{sec:experiments}, including task setup, baseline definitions, planner hyperparameters, and LIBERO-Goal training details.

\paragraph{Task suite.}
The main evaluation uses five pixel-based goal-conditioned control tasks: TwoRoom, Reacher, Push-T, Wall, and Cube.
Each episode provides an initial observation and a goal image.
The agent executes actions from the initial observation, and success is determined by whether the resulting trajectory reaches the specified goal within the task horizon.
TwoRoom evaluates navigation in a top-down room environment.
Reacher evaluates goal-conditioned reaching.
Push-T evaluates object pushing from image observations.
Wall evaluates obstacle-constrained planning, where Euclidean closeness in latent space can be misleading if the goal is not reachable through a direct path.
Cube evaluates goal-conditioned 3D manipulation.

\paragraph{Evaluation protocol.}
We report the success rate as the primary metric.
For local LeWM-family runs, each method is evaluated on five fixed evaluation groups with 50 episodes per group, and we report the mean and group-level standard deviation.
The same fixed groups are used across local LeWM-family comparisons, enabling paired episode-level analyses in Appendix~\ref{app:additional_results}.
Previously reported baseline results are included as benchmark references in the main comparison table, while local LeWM-family rows are reported with group-level standard deviations.

\paragraph{Baselines.}
We compare RC-aux with prior baselines and LeWM-family controls.
The prior baselines include reconstruction-free visual world models and latent-dynamics methods such as DINO-WM~\citep{zhou2025dinowm} and PLDM~\citep{two}, goal-conditioned imitation learning such as GCBC~\citep{GCBC}, and offline value-learning methods such as IQL~\citep{IQL} and IVL~\citep{ogbench}.
When available, we also include task-specific references such as DINO-WM+prop on Push-T.
The LeWM-family rows provide the closest controlled comparison: LeWM is the original backbone baseline, LeWM-cont is a continuation-trained control using the same backbone, and RC-aux augments LeWM~\citep{maes2026leworldmodel} with the proposed reachability-correction objective and reachability-aware planner.

\paragraph{LeWM-family controls.}
For TwoRoom, Reacher, Push-T, and Cube, the controlled comparison uses LeWM-cont as the primary LeWM-family comparator.
This continuation control preserves the LeWM architecture while accounting for additional training under the same backbone family.
For Wall, LeWM-cont is not available, so the controlled comparison is made against the local LeWM run.
In the main results, the matched $\Delta$ row is computed against LeWM-cont when available and against LeWM for Wall.

\paragraph{Planner and evaluation hyperparameters.}
Table~\ref{tab:planner-eval-hparams} lists the main planner and evaluation hyperparameters.
All local LeWM-family results use the same fixed evaluation grouping described above.
For Wall, we use the tuned planner horizon configuration used for that environment.
For Cube, the planner reachability weight is set to zero in the reported configuration.

\begin{table*}[t]
\centering
\caption{
Planner and evaluation hyperparameters for the five-task pixel-control evaluation.
Each local LeWM-family method is evaluated with five fixed groups of 50 episodes.
}
\label{tab:planner-eval-hparams}
\begin{tabular}{lcccccc}
\toprule
Task & CEM samples & CEM iters & Top-$k$ & Budget & Goal offset / horizon & $\lambda_{\mathrm{plan}}$ \\
\midrule
TwoRoom & 300 & 30 & 30 & 50 & 25 & 0.35 \\
Reacher & 300 & 30 & 30 & 50 & 25 & 0.35 \\
Push-T & 300 & 30 & 30 & 50 & 25 & 0.35 \\
Wall & 600 & 20 & 60 & 60 & $h=8,\ rh=4$ & 0.85 \\
Cube & 300 & 10 & 30 & 50 & 25 & 0.00 \\
\bottomrule
\end{tabular}
\end{table*}

\paragraph{LIBERO-Goal OFT setup.}
For the LIBERO-Goal extension, we evaluate all 10 LIBERO-Goal tasks using the official success checker with 50 evaluation episodes per task and a maximum horizon of 600 environment steps.
The policy uses two RGB views, \texttt{agentview\_rgb} and \texttt{eye\_in\_hand\_rgb}, and trains an OFT-style action chunk head on top of the learned representation.
The main matched comparison uses trainable encoders and the no-repeat all-task setting for both LeWM and RC-aux.
We additionally report a repeat-tuned RC-aux result as the best observed RC-aux setting.

\begin{table}[t]
\centering
\caption{
Training and evaluation settings for the LIBERO-Goal OFT-style extension.
}
\label{tab:libero-oft-hparams}
\begin{tabular}{lc}
\toprule
Setting & Value \\
\midrule
Tasks & LIBERO-Goal, tasks 0--9 \\
Evaluation episodes & 50 per task \\
Maximum steps & 600 \\
Observation views & \texttt{agentview\_rgb}, \texttt{eye\_in\_hand\_rgb} \\
Action head & OFT-style chunk head \\
Chunk length & 8 \\
Action horizon & 8 \\
Batch size & 32 \\
Learning rate & $5\times 10^{-5}$ \\
Weight decay & $1\times 10^{-4}$ \\
Precision & bfloat16 \\
Loss & $\ell_1$ action loss \\
Main comparator & Trainable, no-repeat \\
\bottomrule
\end{tabular}
\end{table}

\section{LIBERO-Goal Extension Details}
\label{app:libero_details}

This appendix reports per-task LIBERO-Goal results for the OFT-style action-head extension.
The training and evaluation setup is described in Appendix~\ref{app:exp_details}.
The mean-level results are summarized in Section~\ref{sec:libero}; here we provide the task-level breakdown.
Table~\ref{tab:libero-goal-pertask} reports per-task success rates.
Under the matched no-repeat protocol, RC-aux improves or matches LeWM on all 10 tasks, with the largest gains on tasks 0, 8, and 9.
The repeat-tuned setting further improves weaker tasks, especially task 5.

\begin{table*}[h]
\centering
\caption{
Per-task LIBERO-Goal success rates.
The matched comparison is between LeWM trainable no-repeat and RC-aux trainable no-repeat.
Repeat-tuned RC-aux and OpenVLA-OFT 7B~\citep{openvla1,openvla2} are included as additional references.
}
\label{tab:libero-goal-pertask}
\scalebox{0.8}{\begin{tabular}{lccccccccccc}
\toprule
Method & T0 & T1 & T2 & T3 & T4 & T5 & T6 & T7 & T8 & T9 & Mean \\
\midrule
LeWM, trainable no-repeat
& 0.64 & 0.78 & 0.70 & 0.76 & 0.90 & 0.44 & 0.60 & 0.94 & 0.70 & 0.66 & 0.712 \\
RC-aux, trainable no-repeat
& 0.92 & 0.86 & 0.80 & 0.78 & 0.96 & 0.48 & 0.70 & 0.96 & 0.86 & 0.80 & 0.812 \\
RC-aux, repeat-tuned
& 0.94 & 0.86 & 0.76 & 0.88 & 0.92 & 0.84 & 0.74 & 1.00 & 0.88 & 0.82 & 0.864 \\
OpenVLA-OFT 7B
& 0.98 & 0.92 & 0.96 & 0.86 & 1.00 & 1.00 & 1.00 & 1.00 & 1.00 & 0.98 & 0.970 \\
\bottomrule
\end{tabular}}
\end{table*}

\section{Additional Quantitative Results}
\label{app:additional_results}

This appendix provides additional quantitative evidence for the results in Section~\ref{sec:experiments}.
We include paired fixed-episode outcomes, local LeWM-family success visualizations, a planner-ablation visualization, and computational overhead measurements.

\subsection{Paired Fixed-Episode Outcomes}
\label{app:paired_outcomes}

To verify that the aggregate gains are not caused by differences in evaluation episodes, we compare LeWM and RC-aux on the same fixed evaluation episodes.
Table~\ref{tab:paired-outcomes} reports episode-level outcome counts.
RC-aux solves many episodes missed by LeWM, especially on Wall and TwoRoom.
On Wall, RC-aux-only successes outnumber LeWM-only successes by $85$ to $2$.
Push-T is the most balanced task, consistent with the main results.

\begin{table}[h]
\centering
\caption{
Paired fixed-episode outcomes for LeWM and RC-aux.
Each entry counts evaluation episodes in one of four categories: both methods fail, only LeWM succeeds, only RC-aux succeeds, or both methods succeed.
}
\label{tab:paired-outcomes}
\begin{tabular}{lrrrr}
\toprule
Task & Both fail & LeWM only & RC-aux only & Both succeed \\
\midrule
TwoRoom & 4 & 1 & 24 & 221 \\
Reacher & 15 & 17 & 32 & 186 \\
Push-T & 10 & 13 & 14 & 213 \\
Wall & 39 & 2 & 85 & 124 \\
Cube & 54 & 6 & 15 & 175 \\
\bottomrule
\end{tabular}
\end{table}

Figure~\ref{fig:paired-outcomes} visualizes the same paired outcomes over the fixed evaluation groups.
The visual pattern highlights that the largest aggregate gains correspond to many RC-aux-only successes on the same evaluation episodes.

\begin{figure*}[h]
    \centering
    \includegraphics[width=0.98\linewidth]{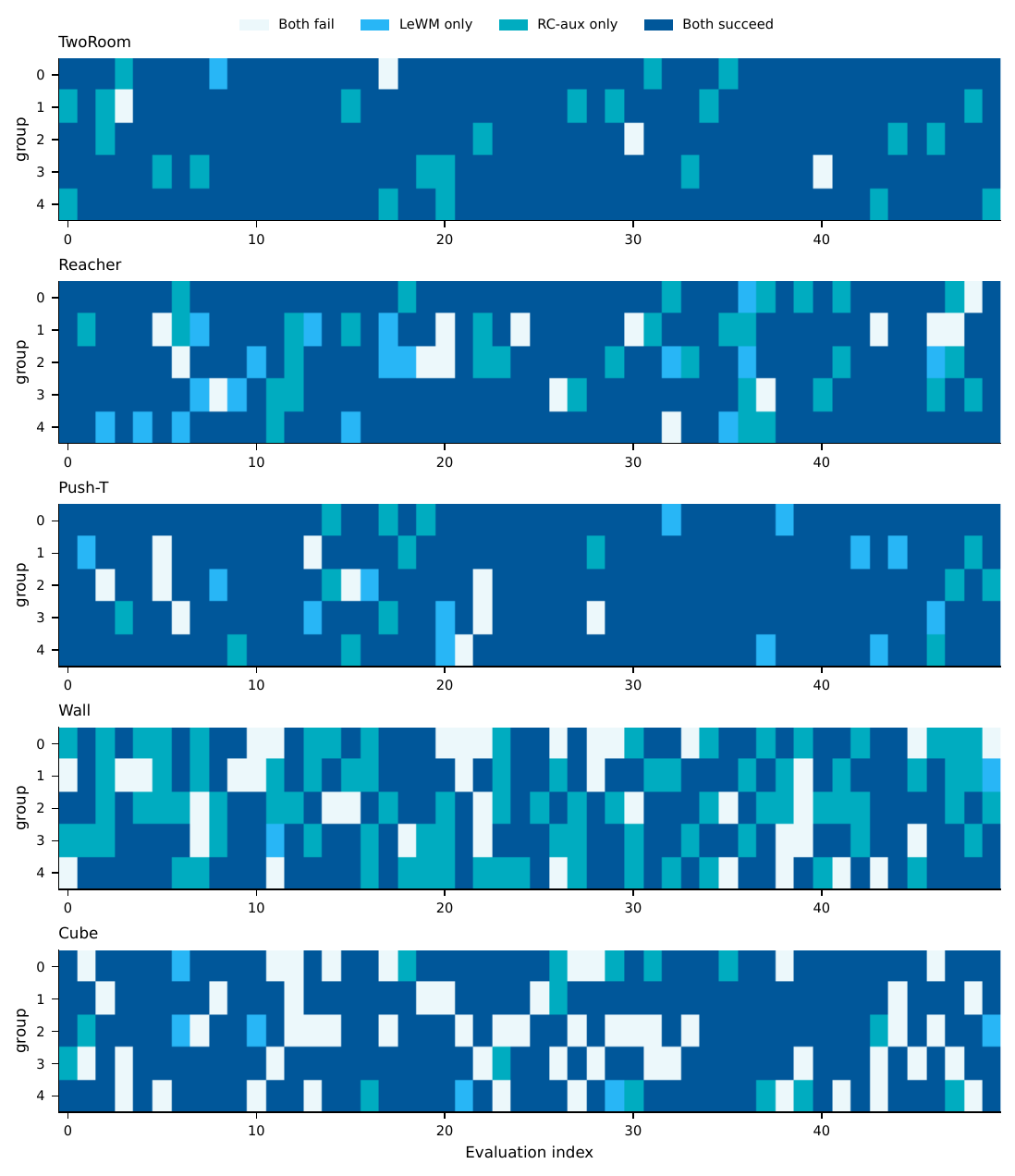}
    \caption{
    Paired outcomes on fixed evaluation episodes.
    Each cell corresponds to one evaluation episode.
    RC-aux-only successes are most frequent on Wall and TwoRoom, while Push-T remains nearly balanced between LeWM-only and RC-aux-only successes.
    }
    \label{fig:paired-outcomes}
\end{figure*}

\subsection{Local LeWM-Family Success Visualizations}
\label{app:local_success}

Figure~\ref{fig:local-main-success} shows the local LeWM-family success rates with group-level standard deviations.
This visualization isolates the comparison among LeWM, continuation-trained LeWM, and RC-aux, complementing the broader baseline comparison in the main text.
For TwoRoom, Reacher, Push-T, and Cube, the matched controlled comparator is LeWM-cont.
For Wall, where LeWM-cont is unavailable, the matched controlled comparator is the available local LeWM run.
This is the comparison used for the matched deltas reported in Table~\ref{tab:main-success}.

\begin{figure}[h]
    \centering
    \includegraphics[width=0.8\linewidth]{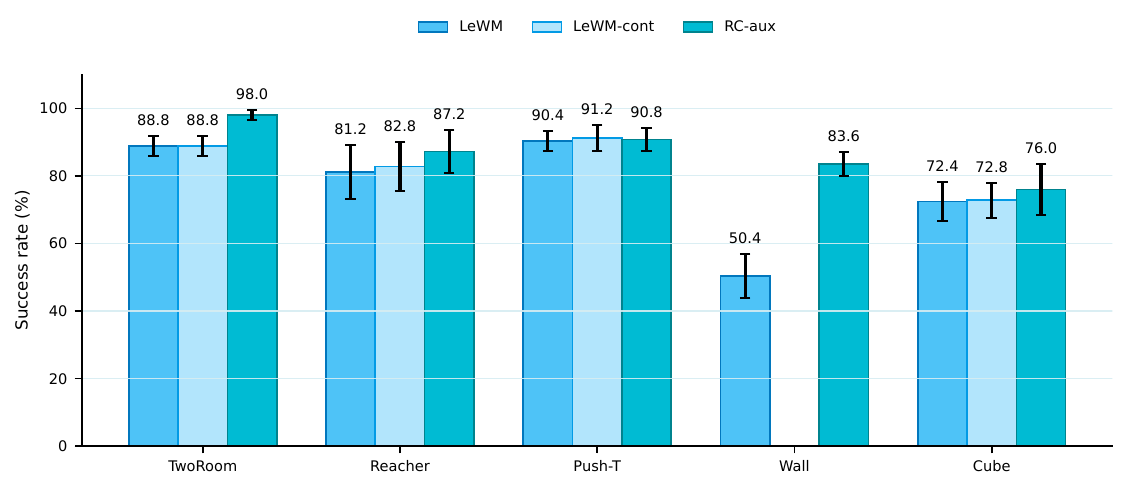}
    \caption{
    Local LeWM-family success rates.
    Bars show mean success over five fixed evaluation groups, and error bars show group-level standard deviation.
    For matched controlled comparisons, RC-aux is compared against LeWM-cont when available and against LeWM for Wall.
    }
    \label{fig:local-main-success}
\end{figure}





\section{Qualitative Results}
\label{app:additional_qualitative}

\subsection{Paired Wall and Cube Comparisons}
\label{app:paired_qualitative}

Figure~\ref{fig:wall-qualitative-app} shows paired Wall rollouts.
Wall provides a visually clear example of obstacle-constrained reachability: latent proximity to the goal can be misleading when the direct path is blocked.
RC-aux produces rollouts that more consistently follow attainable paths toward the goal.

\begin{figure*}[h]
    \centering
    \includegraphics[width=0.98\linewidth]{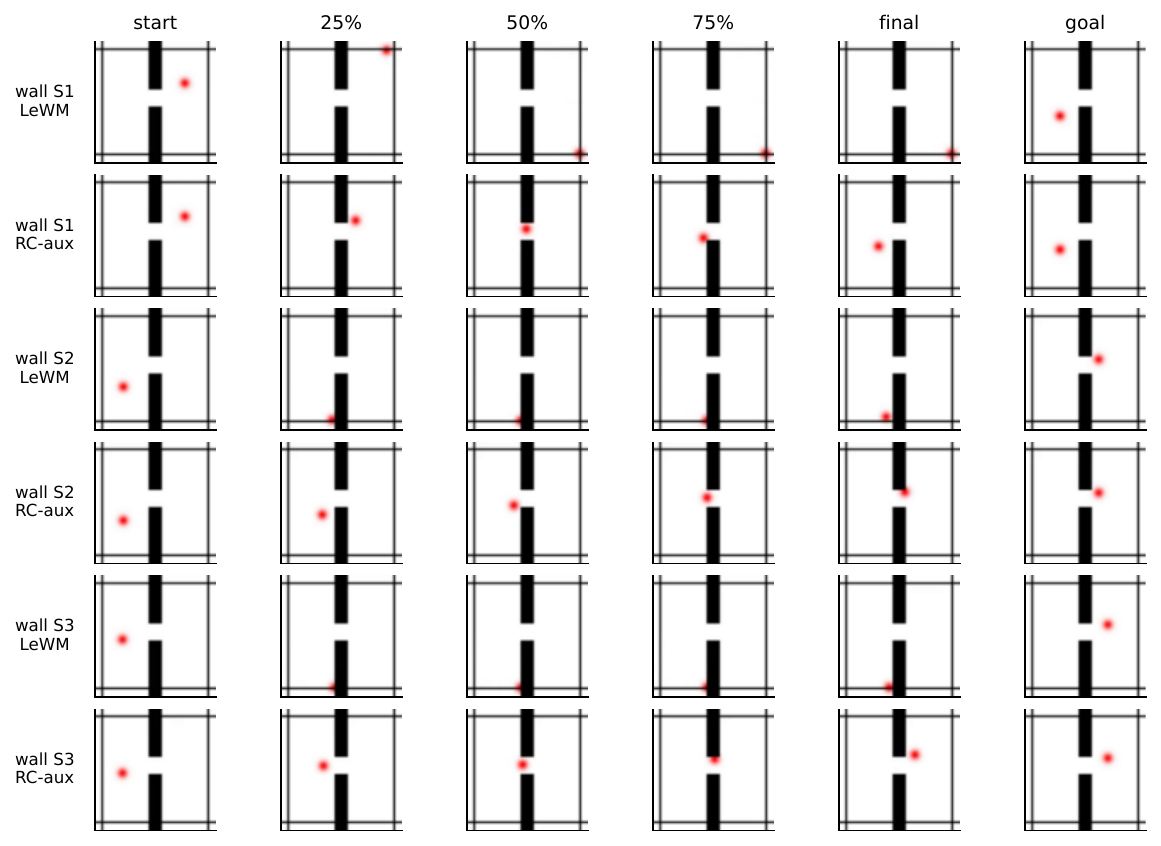}
    \caption{
    Paired Wall rollouts comparing LeWM and RC-aux.
    Wall highlights obstacle-constrained reachability, where visually or latently nearby states may not be attainable within the planning budget.
    }
    \label{fig:wall-qualitative-app}
\end{figure*}

Figure~\ref{fig:cube-qualitative-app} provides the full Cube comparison.
This figure complements the main-text qualitative result by showing additional Cube rollouts under the same paired comparison format.

\begin{figure*}[h]
    \centering
    \includegraphics[width=0.98\linewidth]{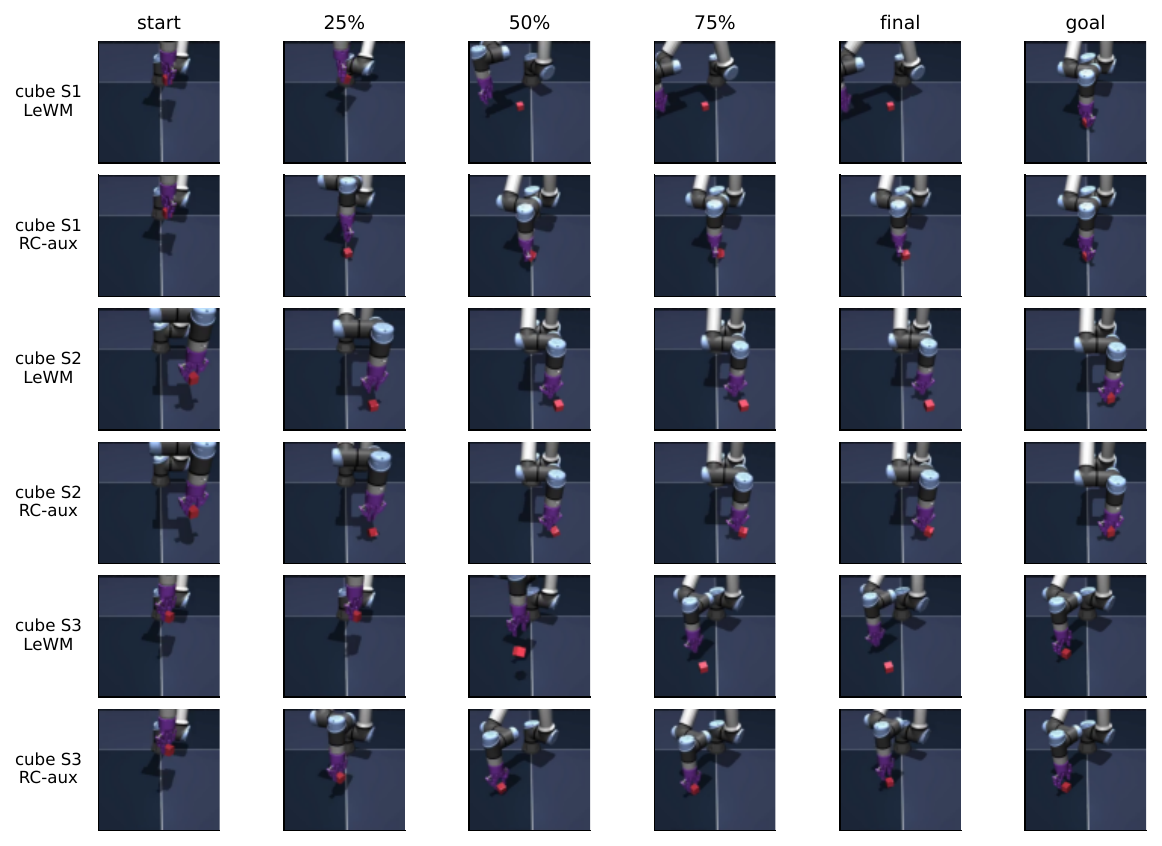}
    \caption{
    Full paired Cube rollouts comparing LeWM and RC-aux.
    The figure provides additional examples beyond the compact Cube visualization in the main text.
    }
    \label{fig:cube-qualitative-app}
\end{figure*}

\subsection{Additional RC-aux Success Rollouts}
\label{app:rcaux_success_rollouts}

Figures~\ref{fig:rcaux-success-wall}--\ref{fig:rcaux-success-cube} show additional successful RC-aux rollouts across the five tasks.
These examples illustrate the visual diversity of the benchmark and the types of goal-conditioned behavior produced by the RC-aux latent planner.

\begin{figure*}[t]
    \centering
    \includegraphics[width=0.98\linewidth]{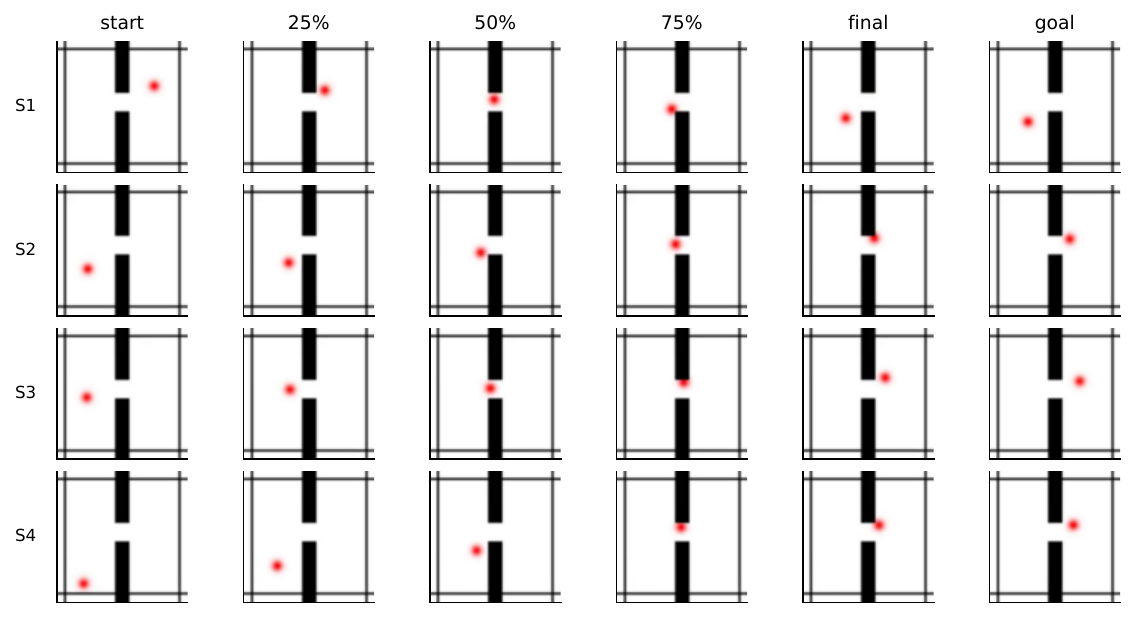}
    \caption{
    Additional successful RC-aux rollouts on Wall.
    }
    \label{fig:rcaux-success-wall}
\end{figure*}

\begin{figure*}[t]
    \centering
    \includegraphics[width=0.98\linewidth]{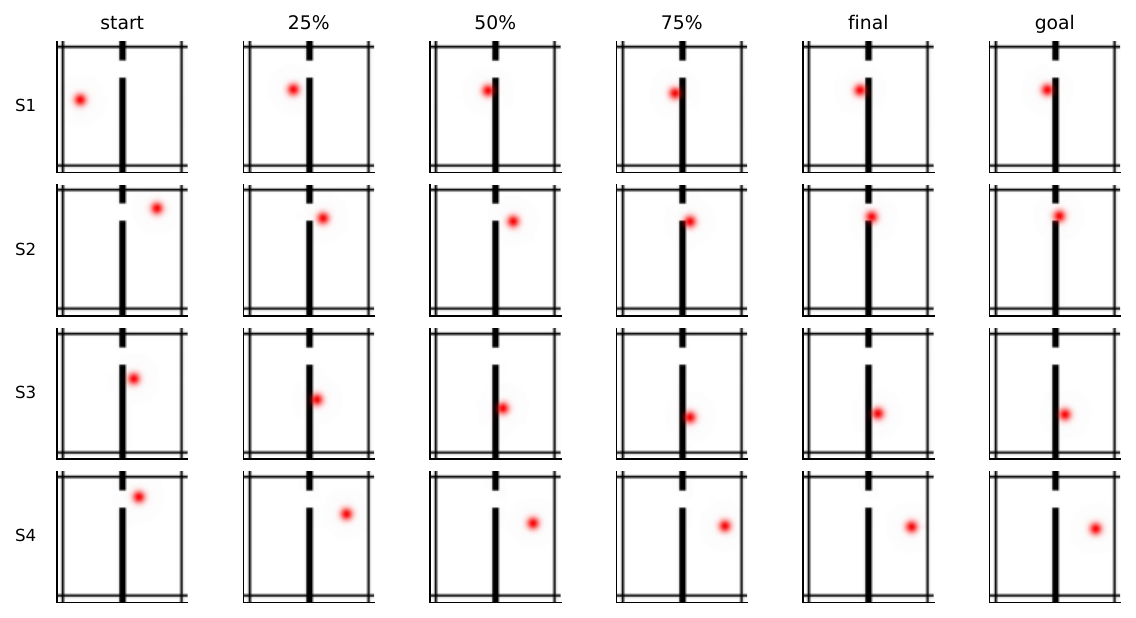}
    \caption{
    Additional successful RC-aux rollouts on TwoRoom.
    }
    \label{fig:rcaux-success-tworoom}
\end{figure*}

\begin{figure*}[t]
    \centering
    \includegraphics[width=0.98\linewidth]{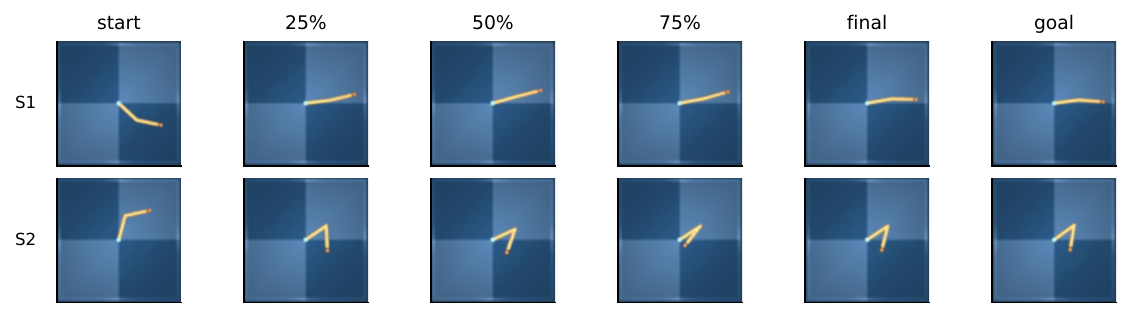}
    \caption{
    Additional successful RC-aux rollouts on Reacher.
    }
    \label{fig:rcaux-success-reacher}
\end{figure*}

\begin{figure*}[t]
    \centering
    \includegraphics[width=0.98\linewidth]{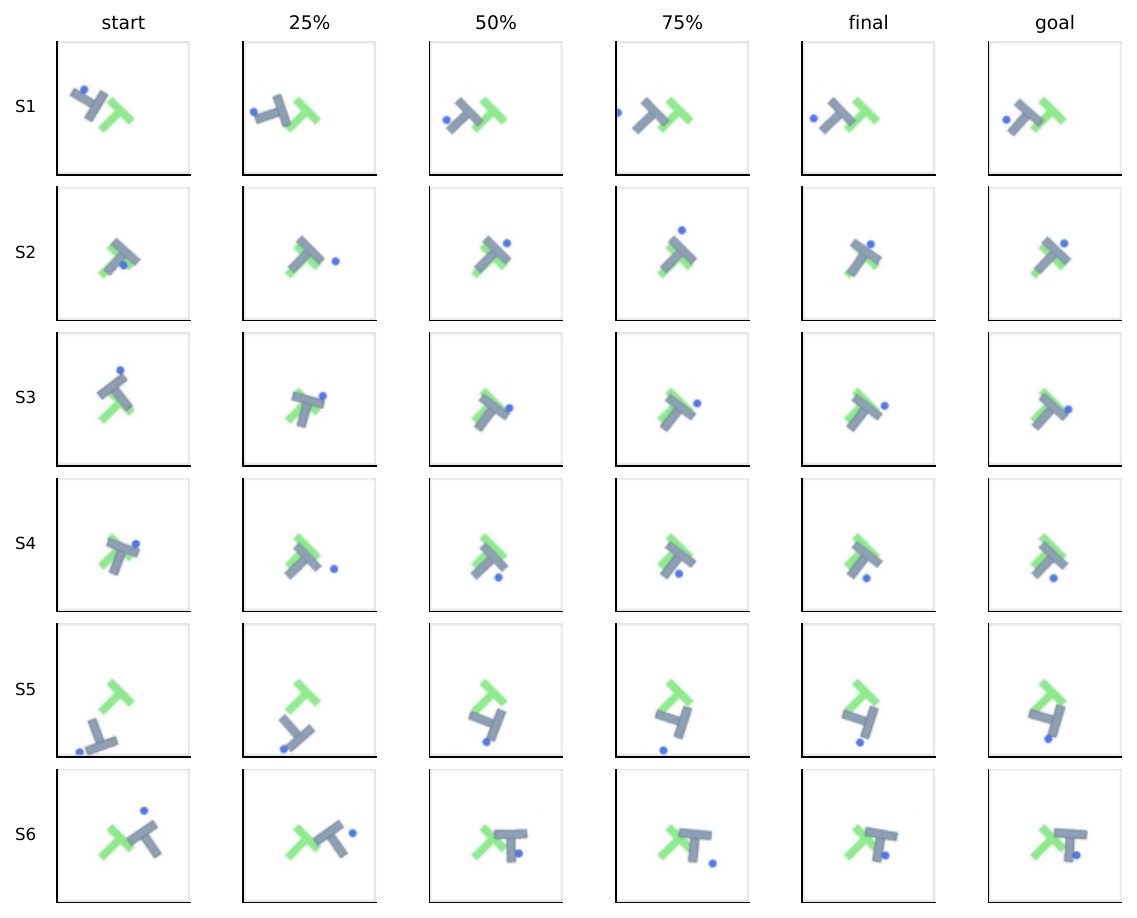}
    \caption{
    Additional successful RC-aux rollouts on Push-T.
    }
    \label{fig:rcaux-success-pusht}
\end{figure*}

\begin{figure*}[t]
    \centering
    \includegraphics[width=0.98\linewidth]{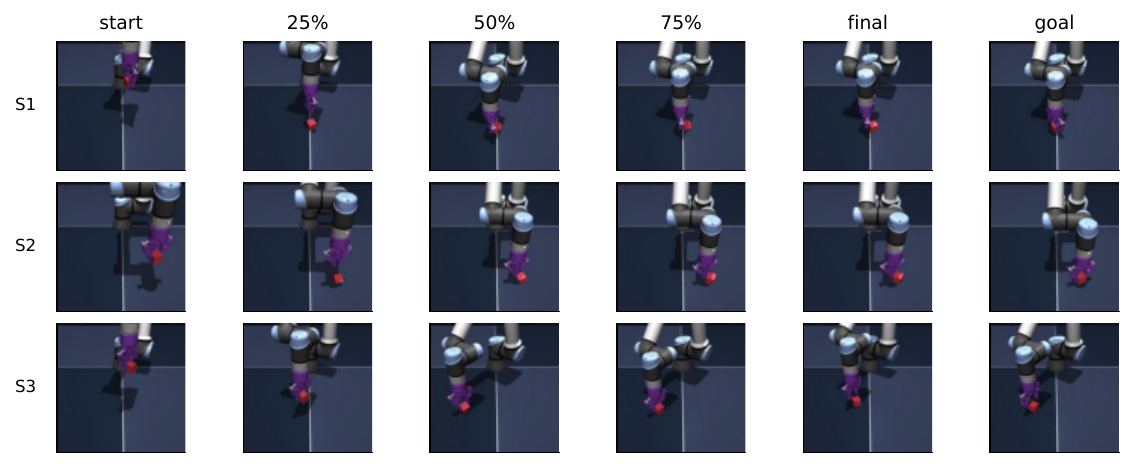}
    \caption{
    Additional successful RC-aux rollouts on Cube.
    }
    \label{fig:rcaux-success-cube}
\end{figure*}

\subsection{Approximate Pixel-Space Trajectory Overlays}
\label{app:trajectory_overlays}

Figure~\ref{fig:trajectory-overlays} provides approximate pixel-space trajectory overlays for Wall and TwoRoom.
The trajectories are extracted from rendered videos by tracking the visible object or agent center, and should therefore be interpreted as qualitative visualizations rather than simulator ground-truth state trajectories.

\begin{figure*}[h]
    \centering
    \includegraphics[width=0.85\linewidth]{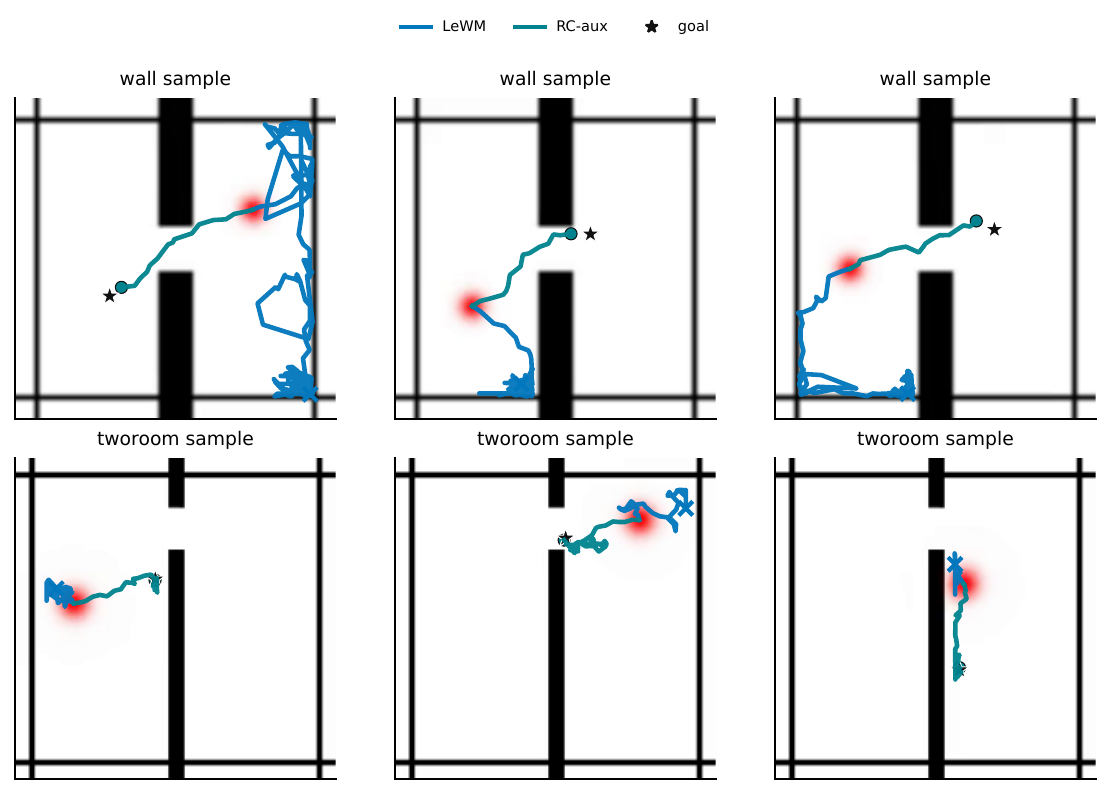}
    \caption{
    Approximate pixel-space trajectory overlays.
    Paths are extracted from rendered rollout videos by color tracking.
    The visualization is qualitative and is intended to illustrate the trajectory-level behavior of LeWM and RC-aux.
    }
    \label{fig:trajectory-overlays}
\end{figure*}

\section{Latent and Physical Diagnostics}
\label{app:diagnostics}

This appendix provides additional diagnostics of the learned latent representation.
These analyses complement the success-rate results by examining latent terminal distances, reachability scores, and physical probing on Push-T.

\subsection{Latent Reachability Diagnostic}
\label{app:latent_diagnostic}

Figure~\ref{fig:latent-reachability-diagnostic} visualizes selected Wall and TwoRoom rollouts in a two-dimensional PCA projection of the corresponding model latents.
The figure is diagnostic rather than a matched quantitative benchmark: each panel uses the corresponding model encoder and PCA projection.
The high-dimensional terminal latent distances reported in the titles provide a more reliable comparison.

\begin{figure*}[t]
    \centering
    \includegraphics[width=0.98\linewidth]{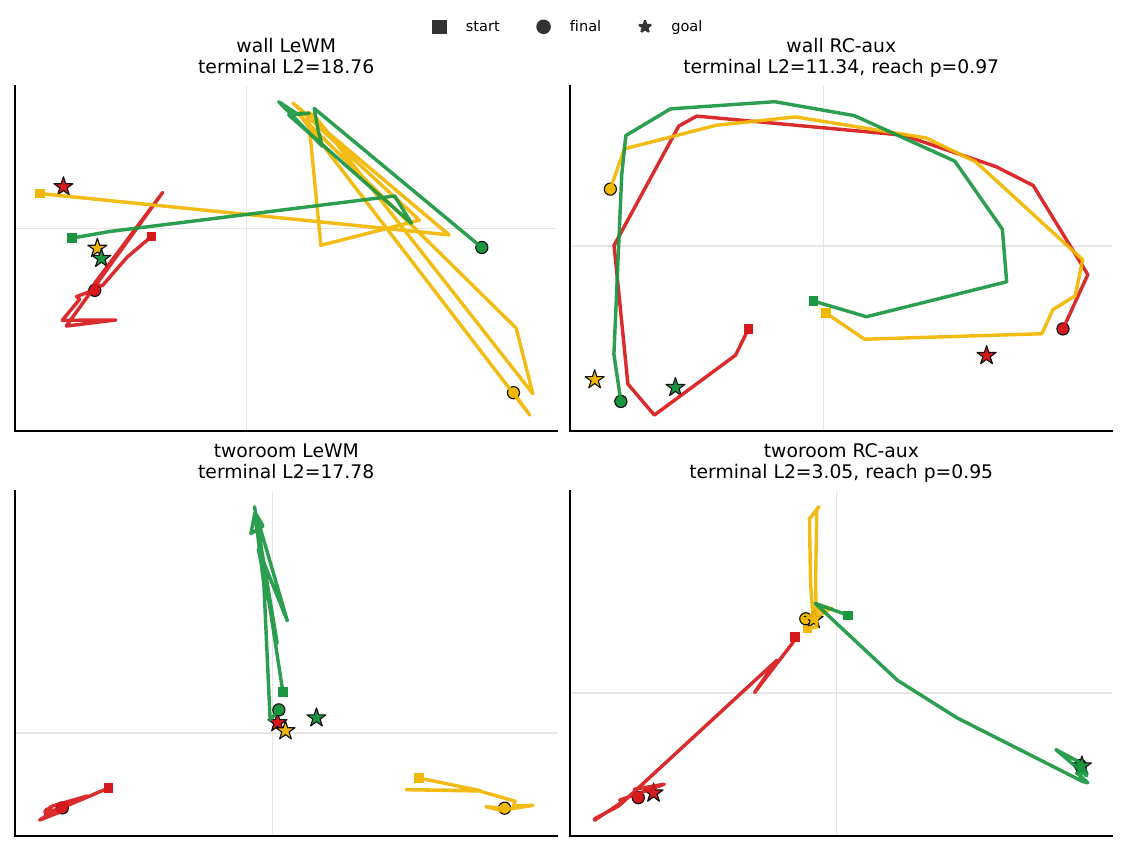}
    \caption{
    Latent reachability diagnostic on selected Wall and TwoRoom rollouts.
    Rollout frames and goals are encoded by the corresponding model and projected to two dimensions by PCA for visualization.
    Squares, circles, and stars denote start, final, and goal latents.
    The diagnostic suggests that RC-aux terminal latents are closer to the encoded goals and are assigned high reachability by the learned reachability head.
    }
    \label{fig:latent-reachability-diagnostic}
\end{figure*}

Figure~\ref{fig:terminal-distance-summary} summarizes terminal latent distances on the selected diagnostic examples.
The RC-aux rollouts end closer to the goal latents in these examples, consistent with the paired-outcome and qualitative rollout results.

\begin{figure}[t]
    \centering
    \includegraphics[width=0.6\linewidth]{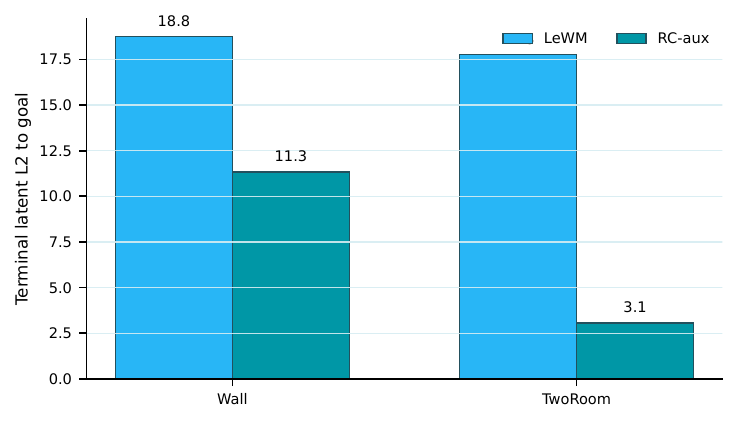}
    \caption{
    Terminal latent distance summary for selected diagnostic rollouts.
    Lower distance indicates that the final rollout latent is closer to the encoded goal latent.
    }
    \label{fig:terminal-distance-summary}
\end{figure}

\subsection{Push-T Physical Probe}
\label{app:physical_probe}

We further evaluate whether RC-aux preserves physical information in the latent representation on Push-T.
Table~\ref{tab:physical-probe-summary} summarizes local probe results for representative physical quantities.
RC-aux is comparable to LeWM across these probes, suggesting that the reachability correction does not discard basic physical information.

\begin{table}[t]
\centering
\caption{
Push-T physical probe summary.
Lower MSE is better.
The results indicate that RC-aux preserves physical information in the latent representation.
}
\label{tab:physical-probe-summary}
\begin{tabular}{llcc}
\toprule
Target & Probe metric & LeWM & RC-aux \\
\midrule
Agent location & Linear MSE & 0.039 & 0.037 \\
Block location & Linear MSE & 0.021 & 0.020 \\
Block angle & Linear MSE & 0.170 & 0.157 \\
Block angle & MLP MSE & 0.017 & 0.020 \\
\bottomrule
\end{tabular}
\end{table}

Figure~\ref{fig:physical-probe-pusht} visualizes the Push-T linear-probe MSE comparison.

\begin{figure}[h]
    \centering
    \includegraphics[width=1\linewidth]{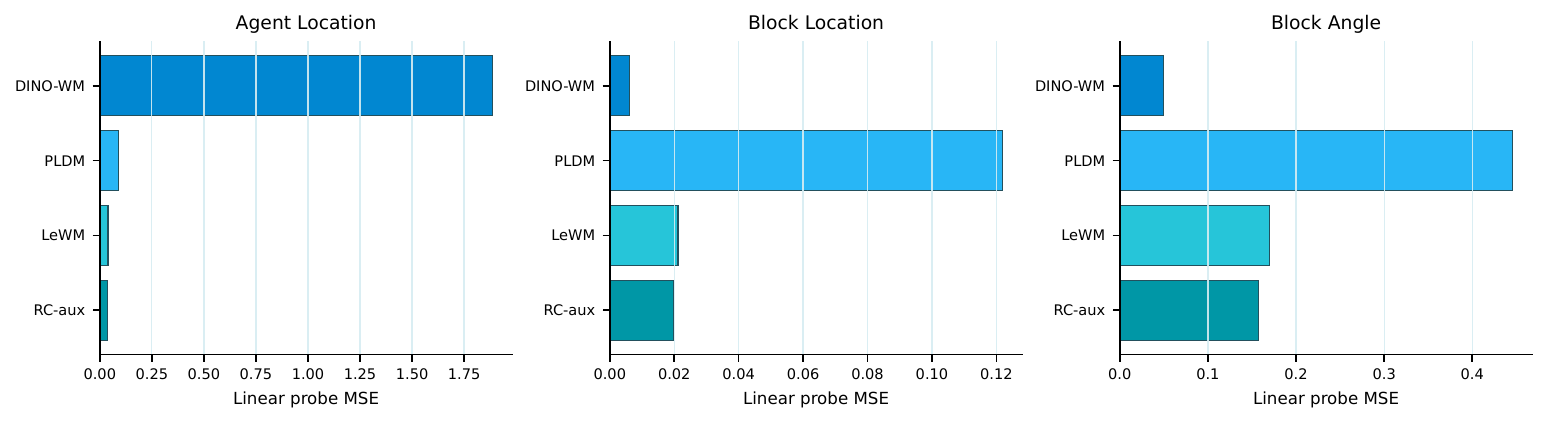}
    \caption{
    Push-T physical probe with linear predictors.
    RC-aux remains comparable to LeWM on physical quantities, indicating that reachability supervision does not remove basic physical structure from the learned latents.
    }
    \label{fig:physical-probe-pusht}
\end{figure}

\section{Broader Impacts}
\label{app:impact}

The work is foundational research on latent world models for planning. Potential positive impacts include improving the reliability and efficiency of model-based planning from pixels, while potential risks include inappropriate deployment of learned planning systems in real-world robotic settings without adequate validation, safety constraints, or uncertainty handling. The paper frames the method as an offline benchmark study rather than a deployed autonomous system.


\clearpage
\section*{NeurIPS Paper Checklist}

\begin{enumerate}

\item {\bf Claims}
    \item[] Question: Do the main claims made in the abstract and introduction accurately reflect the paper's contributions and scope?
    \item[] Answer: \answerYes{} 
    \item[] Justification: The abstract and introduction state that RC-aux improves the plannability of reconstruction-free latent world models by aligning open-loop prediction and finite-horizon reachability with downstream planning. These claims are supported by the method formulation in Section~3, the planning-alignment analysis in Appendix~B, and the controlled empirical results in Section~4 and Appendices~D--F.
    \item[] Guidelines:
    \begin{itemize}
        \item The answer \answerNA{} means that the abstract and introduction do not include the claims made in the paper.
        \item The abstract and/or introduction should clearly state the claims made, including the contributions made in the paper and important assumptions and limitations. A \answerNo{} or \answerNA{} answer to this question will not be perceived well by the reviewers. 
        \item The claims made should match theoretical and experimental results, and reflect how much the results can be expected to generalize to other settings. 
        \item It is fine to include aspirational goals as motivation as long as it is clear that these goals are not attained by the paper. 
    \end{itemize}

\item {\bf Limitations}
    \item[] Question: Does the paper discuss the limitations of the work performed by the authors?
    \item[] Answer: \answerYes{} 
    \item[] Justification: The paper discusses limitations in the conclusion, including the use of trajectory-derived reachability labels as a proxy for true environment-level attainability and the use of a simple reachability gate rather than a full decision-theoretic treatment of feasibility. Additional experimental scope and diagnostic evidence are provided in the appendices.
    \item[] Guidelines:
    \begin{itemize}
        \item The answer \answerNA{} means that the paper has no limitation while the answer \answerNo{} means that the paper has limitations, but those are not discussed in the paper. 
        \item The authors are encouraged to create a separate ``Limitations'' section in their paper.
        \item The paper should point out any strong assumptions and how robust the results are to violations of these assumptions (e.g., independence assumptions, noiseless settings, model well-specification, asymptotic approximations only holding locally). The authors should reflect on how these assumptions might be violated in practice and what the implications would be.
        \item The authors should reflect on the scope of the claims made, e.g., if the approach was only tested on a few datasets or with a few runs. In general, empirical results often depend on implicit assumptions, which should be articulated.
        \item The authors should reflect on the factors that influence the performance of the approach. For example, a facial recognition algorithm may perform poorly when image resolution is low or images are taken in low lighting. Or a speech-to-text system might not be used reliably to provide closed captions for online lectures because it fails to handle technical jargon.
        \item The authors should discuss the computational efficiency of the proposed algorithms and how they scale with dataset size.
        \item If applicable, the authors should discuss possible limitations of their approach to address problems of privacy and fairness.
        \item While the authors might fear that complete honesty about limitations might be used by reviewers as grounds for rejection, a worse outcome might be that reviewers discover limitations that aren't acknowledged in the paper. The authors should use their best judgment and recognize that individual actions in favor of transparency play an important role in developing norms that preserve the integrity of the community. Reviewers will be specifically instructed to not penalize honesty concerning limitations.
    \end{itemize}

\item {\bf Theory assumptions and proofs}
    \item[] Question: For each theoretical result, does the paper provide the full set of assumptions and a complete (and correct) proof?
    \item[] Answer: \answerYes{} 
    \item[] Justification: The paper provides the formal planning-alignment analysis in Appendix~B. The assumptions for the rollout-error and cost-distortion results are stated explicitly, and the corresponding lemmas and derivations are provided to justify how multi-horizon prediction, budget-conditioned reachability, and temporal hard negatives align the training objective with finite-horizon planning queries.
    \item[] Guidelines:
    \begin{itemize}
        \item The answer \answerNA{} means that the paper does not include theoretical results. 
        \item All the theorems, formulas, and proofs in the paper should be numbered and cross-referenced.
        \item All assumptions should be clearly stated or referenced in the statement of any theorems.
        \item The proofs can either appear in the main paper or the supplemental material, but if they appear in the supplemental material, the authors are encouraged to provide a short proof sketch to provide intuition. 
        \item Inversely, any informal proof provided in the core of the paper should be complemented by formal proofs provided in appendix or supplemental material.
        \item Theorems and Lemmas that the proof relies upon should be properly referenced. 
    \end{itemize}

    \item {\bf Experimental result reproducibility}
    \item[] Question: Does the paper fully disclose all the information needed to reproduce the main experimental results of the paper to the extent that it affects the main claims and/or conclusions of the paper (regardless of whether the code and data are provided or not)?
    \item[] Answer: \answerYes{} 
    \item[] Justification: The method is fully specified in Section~3, including the multi-horizon loss, reachability supervision, training objective, and planner scoring rule. Section~4 describes the evaluation tasks and main protocols, while Appendix~C provides training, evaluation, planner, and LIBERO-Goal implementation details needed to reproduce the main experiments.
    \item[] Guidelines:
    \begin{itemize}
        \item The answer \answerNA{} means that the paper does not include experiments.
        \item If the paper includes experiments, a \answerNo{} answer to this question will not be perceived well by the reviewers: Making the paper reproducible is important, regardless of whether the code and data are provided or not.
        \item If the contribution is a dataset and\slash or model, the authors should describe the steps taken to make their results reproducible or verifiable. 
        \item Depending on the contribution, reproducibility can be accomplished in various ways. For example, if the contribution is a novel architecture, describing the architecture fully might suffice, or if the contribution is a specific model and empirical evaluation, it may be necessary to either make it possible for others to replicate the model with the same dataset, or provide access to the model. In general. releasing code and data is often one good way to accomplish this, but reproducibility can also be provided via detailed instructions for how to replicate the results, access to a hosted model (e.g., in the case of a large language model), releasing of a model checkpoint, or other means that are appropriate to the research performed.
        \item While NeurIPS does not require releasing code, the conference does require all submissions to provide some reasonable avenue for reproducibility, which may depend on the nature of the contribution. For example
        \begin{enumerate}
            \item If the contribution is primarily a new algorithm, the paper should make it clear how to reproduce that algorithm.
            \item If the contribution is primarily a new model architecture, the paper should describe the architecture clearly and fully.
            \item If the contribution is a new model (e.g., a large language model), then there should either be a way to access this model for reproducing the results or a way to reproduce the model (e.g., with an open-source dataset or instructions for how to construct the dataset).
            \item We recognize that reproducibility may be tricky in some cases, in which case authors are welcome to describe the particular way they provide for reproducibility. In the case of closed-source models, it may be that access to the model is limited in some way (e.g., to registered users), but it should be possible for other researchers to have some path to reproducing or verifying the results.
        \end{enumerate}
    \end{itemize}

\item {\bf Open access to data and code}
    \item[] Question: Does the paper provide open access to the data and code, with sufficient instructions to faithfully reproduce the main experimental results, as described in supplemental material?
    \item[] Answer: \answerYes{} 
    \item[] Justification: The code is available at \url{https://github.com/Guang000/RC-aux}.
    \item[] Guidelines:
    \begin{itemize}
        \item The answer \answerNA{} means that paper does not include experiments requiring code.
        \item Please see the NeurIPS code and data submission guidelines (\url{https://neurips.cc/public/guides/CodeSubmissionPolicy}) for more details.
        \item While we encourage the release of code and data, we understand that this might not be possible, so \answerNo{} is an acceptable answer. Papers cannot be rejected simply for not including code, unless this is central to the contribution (e.g., for a new open-source benchmark).
        \item The instructions should contain the exact command and environment needed to run to reproduce the results. See the NeurIPS code and data submission guidelines (\url{https://neurips.cc/public/guides/CodeSubmissionPolicy}) for more details.
        \item The authors should provide instructions on data access and preparation, including how to access the raw data, preprocessed data, intermediate data, and generated data, etc.
        \item The authors should provide scripts to reproduce all experimental results for the new proposed method and baselines. If only a subset of experiments are reproducible, they should state which ones are omitted from the script and why.
        \item At submission time, to preserve anonymity, the authors should release anonymized versions (if applicable).
        \item Providing as much information as possible in supplemental material (appended to the paper) is recommended, but including URLs to data and code is permitted.
    \end{itemize}

\item {\bf Experimental setting/details}
    \item[] Question: Does the paper specify all the training and test details (e.g., data splits, hyperparameters, how they were chosen, type of optimizer) necessary to understand the results?
    \item[] Answer: \answerYes{} 
    \item[] Justification: Section~4 specifies the evaluated goal-conditioned control tasks, baselines, matched LeWM-family comparisons, success-rate metric, and compute setting. Appendix~C further provides the training and evaluation configurations, planner settings, and LIBERO-Goal OFT-style action-head details.
    \item[] Guidelines:
    \begin{itemize}
        \item The answer \answerYes{} means that the paper does not include experiments.
        \item The experimental setting should be presented in the core of the paper to a level of detail that is necessary to appreciate the results and make sense of them.
        \item The full details can be provided either with the code, in appendix, or as supplemental material.
    \end{itemize}

\item {\bf Experiment statistical significance}
    \item[] Question: Does the paper report error bars suitably and correctly defined or other appropriate information about the statistical significance of the experiments?
    \item[] Answer: \answerYes{} 
    \item[] Justification: The main LeWM-family results are reported as mean and standard deviation over five fixed evaluation groups. The paper also includes paired fixed-episode comparisons in Appendix~E to verify that the aggregate improvements are not due to differences in evaluation episodes.
    \item[] Guidelines:
    \begin{itemize}
        \item The answer \answerNA{} means that the paper does not include experiments.
        \item The authors should answer \answerYes{} if the results are accompanied by error bars, confidence intervals, or statistical significance tests, at least for the experiments that support the main claims of the paper.
        \item The factors of variability that the error bars are capturing should be clearly stated (for example, train/test split, initialization, random drawing of some parameter, or overall run with given experimental conditions).
        \item The method for calculating the error bars should be explained (closed form formula, call to a library function, bootstrap, etc.)
        \item The assumptions made should be given (e.g., Normally distributed errors).
        \item It should be clear whether the error bar is the standard deviation or the standard error of the mean.
        \item It is OK to report 1-sigma error bars, but one should state it. The authors should preferably report a 2-sigma error bar than state that they have a 96\% CI, if the hypothesis of Normality of errors is not verified.
        \item For asymmetric distributions, the authors should be careful not to show in tables or figures symmetric error bars that would yield results that are out of range (e.g., negative error rates).
        \item If error bars are reported in tables or plots, the authors should explain in the text how they were calculated and reference the corresponding figures or tables in the text.
    \end{itemize}

\item {\bf Experiments compute resources}
    \item[] Question: For each experiment, does the paper provide sufficient information on the computer resources (type of compute workers, memory, time of execution) needed to reproduce the experiments?
    \item[] Answer: \answerYes{} 
    \item[] Justification: Section~4 states that all experiments were conducted on a single NVIDIA RTX A6000 Ada GPU. The paper further reports parameter footprint and controlled planner cost-call timing in Section~4.4 and Appendix~E, including the overhead of RC-aux relative to LeWM and the scoring-time comparison with DINO-WM-S/14.
    \item[] Guidelines:
    \begin{itemize}
        \item The answer \answerNA{} means that the paper does not include experiments.
        \item The paper should indicate the type of compute workers CPU or GPU, internal cluster, or cloud provider, including relevant memory and storage.
        \item The paper should provide the amount of compute required for each of the individual experimental runs as well as estimate the total compute. 
        \item The paper should disclose whether the full research project required more compute than the experiments reported in the paper (e.g., preliminary or failed experiments that didn't make it into the paper). 
    \end{itemize}
    
\item {\bf Code of ethics}
    \item[] Question: Does the research conducted in the paper conform, in every respect, with the NeurIPS Code of Ethics \url{https://neurips.cc/public/EthicsGuidelines}?
    \item[] Answer: \answerYes{} 
    \item[] Justification: The study uses existing simulated control and robot-manipulation benchmarks and does not involve human-subject research, sensitive personal data, or high-risk data collection. The authors have reviewed the NeurIPS Code of Ethics and made efforts to preserve anonymity throughout the submission.
    \item[] Guidelines:
    \begin{itemize}
        \item The answer \answerNA{} means that the authors have not reviewed the NeurIPS Code of Ethics.
        \item If the authors answer \answerNo, they should explain the special circumstances that require a deviation from the Code of Ethics.
        \item The authors should make sure to preserve anonymity (e.g., if there is a special consideration due to laws or regulations in their jurisdiction).
    \end{itemize}

\item {\bf Broader impacts}
    \item[] Question: Does the paper discuss both potential positive societal impacts and negative societal impacts of the work performed?
    \item[] Answer: \answerYes{} 
    \item[] Justification: The work is foundational research on latent world models for planning. Potential positive impacts include improving the reliability and efficiency of model-based planning from pixels, while potential risks include inappropriate deployment of learned planning systems in real-world robotic settings without adequate validation, safety constraints, or uncertainty handling. The paper frames the method as an offline benchmark study rather than a deployed autonomous system.
    \item[] Guidelines:
    \begin{itemize}
        \item The answer \answerNA{} means that there is no societal impact of the work performed.
        \item If the authors answer \answerNA{} or \answerNo, they should explain why their work has no societal impact or why the paper does not address societal impact.
        \item Examples of negative societal impacts include potential malicious or unintended uses (e.g., disinformation, generating fake profiles, surveillance), fairness considerations (e.g., deployment of technologies that could make decisions that unfairly impact specific groups), privacy considerations, and security considerations.
        \item The conference expects that many papers will be foundational research and not tied to particular applications, let alone deployments. However, if there is a direct path to any negative applications, the authors should point it out. For example, it is legitimate to point out that an improvement in the quality of generative models could be used to generate Deepfakes for disinformation. On the other hand, it is not needed to point out that a generic algorithm for optimizing neural networks could enable people to train models that generate Deepfakes faster.
        \item The authors should consider possible harms that could arise when the technology is being used as intended and functioning correctly, harms that could arise when the technology is being used as intended but gives incorrect results, and harms following from (intentional or unintentional) misuse of the technology.
        \item If there are negative societal impacts, the authors could also discuss possible mitigation strategies (e.g., gated release of models, providing defenses in addition to attacks, mechanisms for monitoring misuse, mechanisms to monitor how a system learns from feedback over time, improving the efficiency and accessibility of ML).
    \end{itemize}
    
\item {\bf Safeguards}
    \item[] Question: Does the paper describe safeguards that have been put in place for responsible release of data or models that have a high risk for misuse (e.g., pre-trained language models, image generators, or scraped datasets)?
    \item[] Answer: \answerNA{} 
    \item[] Justification: The paper does not release a high-risk pretrained language model, image generator, scraped dataset, or similarly dual-use asset. The proposed method is an auxiliary objective and planner modification for latent world models evaluated on established control benchmarks, so special release safeguards are not applicable.
    \item[] Guidelines:
    \begin{itemize}
        \item The answer \answerNA{} means that the paper poses no such risks.
        \item Released models that have a high risk for misuse or dual-use should be released with necessary safeguards to allow for controlled use of the model, for example by requiring that users adhere to usage guidelines or restrictions to access the model or implementing safety filters. 
        \item Datasets that have been scraped from the Internet could pose safety risks. The authors should describe how they avoided releasing unsafe images.
        \item We recognize that providing effective safeguards is challenging, and many papers do not require this, but we encourage authors to take this into account and make a best faith effort.
    \end{itemize}

\item {\bf Licenses for existing assets}
    \item[] Question: Are the creators or original owners of assets (e.g., code, data, models), used in the paper, properly credited and are the license and terms of use explicitly mentioned and properly respected?
    \item[] Answer: \answerYes{} 
    \item[] Justification: The paper cites the existing benchmarks, models, and baselines used in the experiments, including LeWorldModel, DINO-WM, PLDM, and LIBERO-Goal. The experiments are based on established research assets rather than newly scraped or repackaged datasets.
    \item[] Guidelines:
    \begin{itemize}
        \item The answer \answerNA{} means that the paper does not use existing assets.
        \item The authors should cite the original paper that produced the code package or dataset.
        \item The authors should state which version of the asset is used and, if possible, include a URL.
        \item The name of the license (e.g., CC-BY 4.0) should be included for each asset.
        \item For scraped data from a particular source (e.g., website), the copyright and terms of service of that source should be provided.
        \item If assets are released, the license, copyright information, and terms of use in the package should be provided. For popular datasets, \url{paperswithcode.com/datasets} has curated licenses for some datasets. Their licensing guide can help determine the license of a dataset.
        \item For existing datasets that are re-packaged, both the original license and the license of the derived asset (if it has changed) should be provided.
        \item If this information is not available online, the authors are encouraged to reach out to the asset's creators.
    \end{itemize}

\item {\bf New assets}
    \item[] Question: Are new assets introduced in the paper well documented and is the documentation provided alongside the assets?
    \item[] Answer: \answerNA{} 
    \item[] Justification: The paper does not introduce a new dataset or benchmark asset. The proposed contribution is a training objective and planner family for latent world models; the implementation will be documented when released.
    \item[] Guidelines:
    \begin{itemize}
        \item The answer \answerNA{} means that the paper does not release new assets.
        \item Researchers should communicate the details of the dataset\slash code\slash model as part of their submissions via structured templates. This includes details about training, license, limitations, etc. 
        \item The paper should discuss whether and how consent was obtained from people whose asset is used.
        \item At submission time, remember to anonymize your assets (if applicable). You can either create an anonymized URL or include an anonymized zip file.
    \end{itemize}

\item {\bf Crowdsourcing and research with human subjects}
    \item[] Question: For crowdsourcing experiments and research with human subjects, does the paper include the full text of instructions given to participants and screenshots, if applicable, as well as details about compensation (if any)? 
    \item[] Answer: \answerNA{} 
    \item[] Justification: This study does not involve crowdsourcing experiments, human-subject experiments, or human participant data collection.
    \item[] Guidelines:
    \begin{itemize}
        \item The answer \answerNA{} means that the paper does not involve crowdsourcing nor research with human subjects.
        \item Including this information in the supplemental material is fine, but if the main contribution of the paper involves human subjects, then as much detail as possible should be included in the main paper. 
        \item According to the NeurIPS Code of Ethics, workers involved in data collection, curation, or other labor should be paid at least the minimum wage in the country of the data collector. 
    \end{itemize}

\item {\bf Institutional review board (IRB) approvals or equivalent for research with human subjects}
    \item[] Question: Does the paper describe potential risks incurred by study participants, whether such risks were disclosed to the subjects, and whether Institutional Review Board (IRB) approvals (or an equivalent approval/review based on the requirements of your country or institution) were obtained?
    \item[] Answer: \answerNA{} 
    \item[] Justification: This study does not involve human-subject research or crowdsourcing and therefore does not require IRB approval or equivalent review.
    \item[] Guidelines:
    \begin{itemize}
        \item The answer \answerNA{} means that the paper does not involve crowdsourcing nor research with human subjects.
        \item Depending on the country in which research is conducted, IRB approval (or equivalent) may be required for any human subjects research. If you obtained IRB approval, you should clearly state this in the paper. 
        \item We recognize that the procedures for this may vary significantly between institutions and locations, and we expect authors to adhere to the NeurIPS Code of Ethics and the guidelines for their institution. 
        \item For initial submissions, do not include any information that would break anonymity (if applicable), such as the institution conducting the review.
    \end{itemize}

\item {\bf Declaration of LLM usage}
    \item[] Question: Does the paper describe the usage of LLMs if it is an important, original, or non-standard component of the core methods in this research? Note that if the LLM is used only for writing, editing, or formatting purposes and does \emph{not} impact the core methodology, scientific rigor, or originality of the research, declaration is not required.
    \item[] Answer: \answerNA{} 
    \item[] Justification: The core methodology does not involve large language models. RC-aux is based on latent world-model training, multi-horizon open-loop prediction, budget-conditioned reachability supervision, and reachability-aware planning; therefore, no LLM usage declaration is required for the scientific method.
    \item[] Guidelines:
    \begin{itemize}
        \item The answer \answerNA{} means that the core method development in this research does not involve LLMs as any important, original, or non-standard components.
        \item Please refer to our LLM policy in the NeurIPS handbook for what should or should not be described.
    \end{itemize}

\end{enumerate}

\end{document}